\newtheorem{theorem}{Theorem}
\newtheorem{lemma}{Lemma}
\newtheorem{remark}{Remark}
\pgfplotsset{compat=1.17}
\definecolor{highlightbg}{RGB}{245,245,245}
\title{
  Recurrence-Complete Frame-based Action Models \\
  \vspace{0.5em}
  \normalsize{\it Long-horizon perception requires rethinking recurrence}
}
\author{
  Michael Keiblinger \\
  Prime Intellect \\
  \href{mailto:mike@primeintellect.ai}{mike@primeintellect.ai}
}
\begin{document}

{
\begingroup
\begin{figure*}
    \centering
    \includegraphics[width=0.125\textwidth]{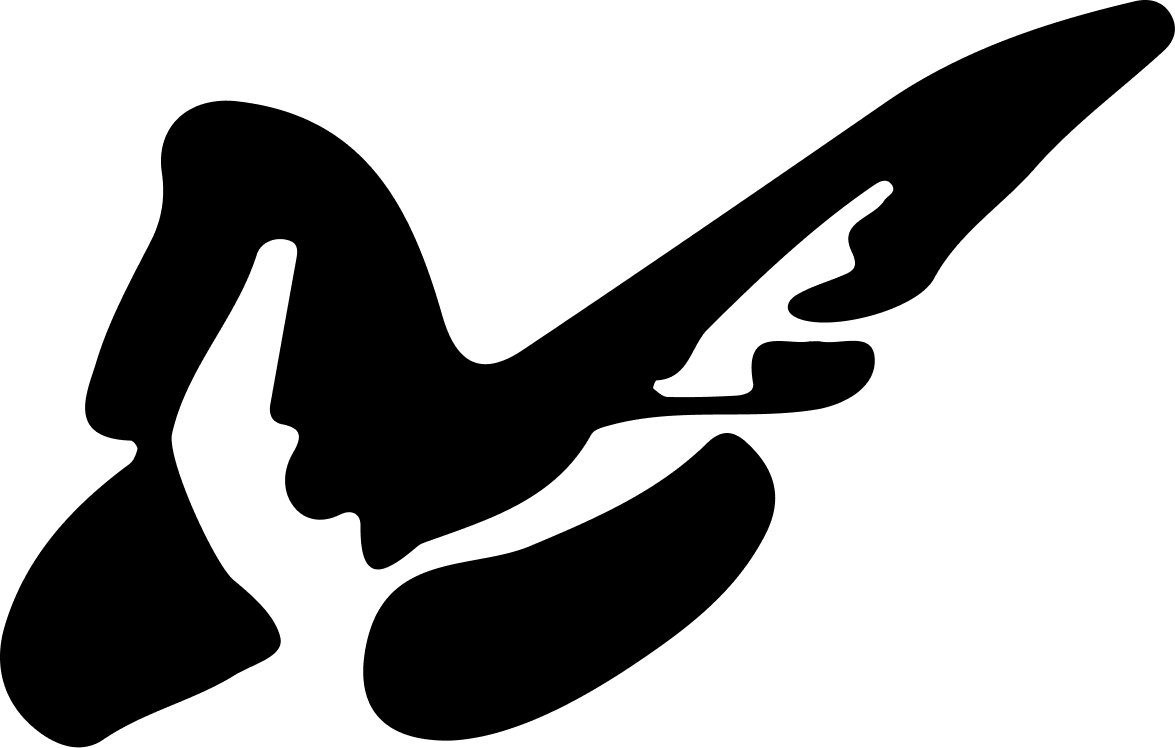}
\end{figure*}
\endgroup
}
\setcounter{figure}{0}

\maketitle
\begin{abstract}
In recent years, attention-like mechanisms have been used to great success in the space of large language models, unlocking scaling potential to a previously unthinkable extent.
``Attention Is All You Need'' famously claims RNN cells are not needed in conjunction with attention. We challenge this view.
In this paper, we point to existing proofs that architectures with fully parallelizable forward or backward passes cannot represent classes of problems specifically interesting for long-running agentic tasks.
We further conjecture a critical time $t$ beyond which non-recurrence-complete models fail to aggregate inputs correctly, with concrete implications for agentic systems (e.g., software engineering agents).
To address this, we introduce a recurrence-complete architecture and train it on GitHub-derived action sequences. Loss follows a power law in the trained sequence length while the parameter count remains fixed. Moreover, longer-sequence training always amortizes its linearly increasing wall-time cost, yielding lower loss as a function of wall time.
\end{abstract}
\newpage

\tableofcontents

\newpage
\section{Introduction}
\label{sec:introduction}

Large language models built around attention have transformed sequence modeling, enabling unprecedented scale and broad competence across text, code, and multimodal inputs. This success has motivated architectures that further emphasize parallelism over time, including state--space models and ``parallelizable RNNs'', which trade strict hidden-state dependencies for scan-style aggregation. A natural reading of this trajectory is that recurrence---in the strict sense of computation that \emph{must} proceed serially---is no longer essential. In this paper, we argue the opposite: for long-horizon perception and agentic control, there is a class of tasks for which \emph{true} serial computation is not optional but required, and that any architecture whose forward or backward passes are fully parallelizable cannot, in general, represent the needed computations.

Our argument centers on two notions that we make precise in Sections~\ref{sec:depth-as-a-function-of-sequence-length} and~\ref{sec:input-data-aggregation}. First, we define \emph{true depth} as the number of inherently sequential (non-parallelizable) operations in the computation trace of a model. Second, following \citet{zhang2024autoregressivechainthought}, we say an architecture is \emph{recurrence-complete} if it can realize recurrent updates of the form
\(
\mathbf{h}_t = g(\mathbf{h}_{t-1}, \mathbf{h}_{t-2}, \ldots, \mathbf{h}_{t-k}, \mathbf{x}_t)
\)
for general (including non-associative) $g$. Under finite/constant precision and a constant number of layers, time-parallel architectures such as Transformers instantiate constant-depth circuit families; prior work placed such families in $\mathrm{TC}^0$ and, under stronger assumptions, in $\mathrm{AC}^0$ \citep{merrill2025illusionstatestatespacemodels,li2024chainthoughtempowerstransformers}. In contrast, strict, hidden-state-dependent recurrences necessarily exhibit true depth $\Omega(n)$ in sequence length $n$.

From these premises we derive three consequences (proofs in App.~\ref{sec:no-free-lunch-proof}--\ref{sec:parallelizable-input-data-aggregation-precludes-recurrence-completeness}). (i) A model with a parallelizable forward \emph{or} backward pass cannot be recurrence-complete (a ``No Free Lunch for Parallelism'' \cite{zhang2024autoregressivechainthought}). (ii) Architectures with parallelizable input aggregation (prefix-scan-like reductions) also cannot be recurrence-complete. (iii) Consequently, families such as Mamba, S4, RWKV, Min-LSTM/GRU, and constant-layer Transformers do not, in general, possess the serial computational depth required for worst-case long-horizon problems \citep{gu2024mambalineartimesequencemodeling,gu2022efficientlymodelinglongsequences,peng2023rwkvreinventingrnnstransformer,feng2024rnnsneeded,beck2024xlstmextendedlongshortterm}.

We then identify a task property that makes these limits operational: \emph{input-length proportionality}. In such problems, correctly aggregating observations up to time $t$ requires $\,\Theta(t)\,$ truly sequential steps---no parallel reordering or associative scan eliminates the dependency chain. We formalize a related failure mode for time-parallel models: \emph{input aggregation criticality}, the maximal length beyond which a non-recurrence-complete model can no longer form the correct latent state due to bounded true depth per layer stack. As the ratio $n/L$ (sequence length over layer count) grows, we predict a degradation in representational fidelity even if attention can, in principle, attend to all tokens.

To make these ideas testable, we design synthetic diagnostics with explicit, data-dependent control flow. The \emph{Forward-Referencing Jumps Task} (FRJT) forces strictly serial evaluation of a straight-line program with forward jumps; whether an instruction executes cannot be known until the previous instruction resolves. A second benchmark, the \emph{Maze Position Tracking Task}, introduces withheld transitions that require state reconstruction rather than simple parallel counting. Across both, time-parallel models exhibit accuracy cliffs as depth grows, while a 1-layer LSTM---which is strictly serial---generalizes substantially farther (see section \ref{sec:experiments} for experiments). These results are consistent with our theory: when the underlying computation is non-scannable, architectures without depth that scales with sequence length falter.

We next connect these diagnostics to practical long-horizon perception. Agentic systems that interact with tools (shells, editors, browsers) observe streams that encode only partial state with frequent side effects. Many relevant variables are \emph{latent} and only inferable via persistent, serial bookkeeping (e.g., incremental diffs, file system mutations, UI cursor state). To study this regime at scale, we introduce a \emph{frame-based action modeling} setting: each time step provides a \emph{frame}---a complete, fixed-size 2D view of the interface (e.g., a terminal grid)---paired with the next low-level action (keystrokes or control sequences). We compile such data automatically from Git histories by reconstructing plausible editor sessions and capturing the resulting terminal frame buffer with a compact, lossless \texttt{termstreamxz} format (Figures~\ref{fig:git-history}--\ref{fig:terminal-frame}). The result is ``text-video with actions'', a natural substrate for long-horizon sequence learning.

Motivated by the above, we propose a \emph{Recurrence-Complete Frame-based Action Model}. Each frame is embedded by a transformer head with intra-frame pooling, but temporal integration is performed by a residual stack of LSTM cells, deliberately embracing non-parallelizable serial computation. We train with full backpropagation through time using a streaming, recompute-on-the-fly schedule that keeps activation memory effectively $O(1)$ in sequence length (at the cost of wall-time), aligning compute with the serial nature of the problem.

Our central empirical finding is a robust \emph{power law in trained sequence length at fixed parameter count} (see \ref{sec:fbam-experiments}). On GitHub-derived action sequences, increasing the number of frames per example monotonically lowers loss at a fixed step budget according to $\mathrm{loss}(L\mid s)\approx A(s)L^{-\alpha(s)}$, with $\alpha(s)$ rising early in training and saturating later. When accounting for the linear wall-time cost of longer sequences, the extra expense is \emph{amortized}: beyond a crossover, longer-sequence runs overtake shorter ones on loss vs.\ wall time and maintain an advantage thereafter. Importantly, unlike standard language modeling where longer contexts often chiefly improve late tokens, we observe uniform improvement across early and late positions, indicating genuine enhancement of the model's \emph{perceptual} state rather than opportunistic use of extra context.

\paragraph{Contributions.}
\begin{enumerate}
  \item \textbf{Theory.} We formalize \emph{recurrence completeness} and \emph{true depth}, and prove that architectures with parallelizable forward/backward passes or parallelizable input aggregation cannot be recurrence-complete (App.~\ref{sec:no-free-lunch-proof}--\ref{sec:parallelizable-input-data-aggregation-precludes-recurrence-completeness}). We introduce \emph{input-length proportionality} and \emph{input aggregation criticality} as operational diagnostics.
  \item \textbf{Diagnostics.} We propose FRJT and the Withheld Maze Position Tracking benchmarks that force serial computation. Under matched budgets, time-parallel models (Transformers, Mamba) exhibit depth-dependent breakdowns; a lightweight LSTM maintains performance to significantly greater depths.
  \item \textbf{Model and data.} We introduce a \emph{frame-based action} formulation and a recurrence-complete architecture that integrates a transformer frame head with an LSTM temporal backbone. We construct large-scale training corpora from Git histories by rendering editor sessions into terminal frames with action logs.
  \item \textbf{Scaling results.} Holding parameters fixed, loss follows a power law in sequence length; the longer-sequence runs ultimately dominate on loss vs.\ wall time. We provide measurements of the evolving exponent $\alpha(s)$ and discuss implications for optimization and hardware efficiency.
\end{enumerate}

\paragraph{Scope and implications.}
Our claims are not that attention is ineffective---rather, we identify a broad class of long-horizon, side-effect-laden tasks where non-scannable dependencies arise and where \emph{some} non-parallelizable computation is indispensable, merely challenging the notion that attention is \emph{all} you need. In such regimes, serial integration (e.g., LSTMs with Constant Error Carousel) may be a necessary complement to attention.
Additionally we note that deep residual networks can be viewed as unrolled gated recurrences \citep{hochreiter1997long,srivastava2015highwaynetworks,he2015deepresiduallearningimage,Schmidhuber2025WhoInventedRes}, contextualizing our scaling results with respect to sequence length by means of RNNs acting more akin to ``virtual layers'', rather than sequence models per se.

This perspective also caveats recent chain-of-thought results: textual scratchpads can externalize state, but perception at the decision point still hinges on correctly \emph{aggregating} long streams (section \ref{sec:nuance-cot}).

\newpage
\section{Depth as a function of sequence length}
\label{sec:depth-as-a-function-of-sequence-length}
For clarity we define ``true depth'' as the number of truly sequential operations that can be performed by the model.
Truly sequential operations are operations that are not parallelizable, i.e., operations that depend on the output of previous operations.
Additionally, these operations should not simplify to a smaller number of operations, i.e., they are separated by nonlinearities. Formally, true depth is the length of the longest directed path in the computation DAG (unit-cost primitive gates, including elementwise nonlinearities and non-associative mixing ops).

Any occurrence of ``depth'' in this paper shall refer to ``true depth''.

``Depth as a function of sequence length'' is a term we use to distinguish from cases where true depth does not grow as fast as $n$, where $n$ is the sequence length.

Transformers with a constant number of layers (true depth $O(1)$) and finite/constant-precision arithmetic form DLOGTIME-uniform constant-depth circuit families. Under these assumptions, prior work placed them in $\mathrm{TC}^0$ \citep{merrill2025illusionstatestatespacemodels},
and later work refined the upper bound to $\mathrm{AC}^0$ \citep{li2024chainthoughtempowerstransformers} under constant-bit precision for the activations/softmax.

This is distinctly different from the case of ``depth as a function of sequence length''.
Traditional RNN cells are ``hidden-state dependent'', strictly non-parallelizable, and thus have a depth of $O(n)$.
In recent years, numerous ``parallelizable RNNs'' have been proposed that do not possess the property of ``depth as a function of sequence length''; examples include the Min-LSTM and Min-GRU cells, which explicitly remove the hidden-state dependency of LSTM and GRU cells, arriving at equations similar to state-space models \cite{feng2024rnnsneeded}.

To separate the concept of ``repetition'' from true recurrence, we additionally define the term ``recurrence complete'', analogous to \cite{zhang2024autoregressivechainthought}:
\begin{quote}
  \itshape
  \begin{equation}
    \mathbf{h}_t= f(\mathbf{x}_t) = g(\mathbf{h}_{t-1}, \mathbf{h}_{t-2}, \mathbf{h}_{t-3}, \cdots, \mathbf{h}_{t-k})
    \label{eq:recur}
  \end{equation}
  A model is said to be recurrence-complete if it can represent any recurrent function as specified in Equation \ref{eq:recur}.
  \hfill—\citeauthor{zhang2024autoregressivechainthought}, \emph{\citeyear{zhang2024autoregressivechainthought}}
\end{quote}
We note for clarity that $g(x)$ here can be any general function, including non-associative functions.
Additionally, it can be trivially shown that any model with a parallelizable backward pass cannot be recurrence-complete\footnote{See Appendix \ref{sec:no-free-lunch-proof} \& \ref{sec:reverse-depth-nonparallelizable} for proofs.}:
\begin{quote}
  We propose a ``No Free Lunch'' rule for parallel computing in neural models: parallel training is a must trade-off for Recurrent-Completeness, and both cannot be achieved simultaneously. Specifically, a true recurrent (RC) model cannot be parallelized during either inference or training, as the computation of $\mathbf{h}_{t+1}$ strictly depends on $\mathbf{h}_{t}$ in a sequential manner.
  This can be proven by contradiction. Assume a true \textbf{recurrent} model can be trained or inferred in parallel. Then the acquisition of $\mathbf{h}_{t+1}$ can occur at the same time as $\mathbf{h}_{t}$, meaning that $\mathbf{h}_{t}$ is not a necessary dependency for $\mathbf{h}_{t+1}$. This implies that $\mathbf{h}_{t+1}$ could be computed using some other variable, say $\mathbf{v}$, which is independent of $\mathbf{h}_{t}$. Consequently, this model would not be \textbf{recurrent}, as $\mathbf{h}_{t+1}$ can be expressed as a function of solely $\mathbf{v}$, $g(\mathbf{v})$, contradicting our initial assumption of the model being recurrent.

  \hfill—\citeauthor{zhang2024autoregressivechainthought}, \emph{\citeyear{zhang2024autoregressivechainthought}}
\end{quote}

This coincides with the definition of ``Parallel Sequential Duality'' as defined in \cite{yau2025sequentialparalleldualityprefixscannable}.

We can thus conclude that a non-recurrence-complete model does not possess the property ``depth as a function of sequence length''.

The term ``depth as a function of sequence length'' is a subset of ``recurrence completeness'' where $g(x)$ is a universal approximator assuming sufficient hidden-state capacity for the task at hand.
Specifically, under the standard capacity conditions of an unbounded hidden state and a universal transition function $g(x)$, any architecture whose true-sequential depth grows with the sequence length is recurrence-complete.
Since recurrence-completeness in turn forces $\Omega(n)$ serial steps, the two notions coincide under these assumptions.
\pagebreak

The following is a non-exhaustive list of architectures that are \textbf{NOT} recurrence-complete:
\begin{itemize}
  \item \textbf{Transformers} \citep{vaswani2017attention} \emph{and efficient-attention variants}: Linear Transformers \citep{katharopoulos2020transformersrnnsfastautoregressive}, Performer \citep{choromanski2022rethinkingattentionperformers}, Linformer \citep{wang2020linformerselfattentionlinearcomplexity}, Longformer \citep{beltagy2020longformerlongdocumenttransformer}, BigBird \citep{zaheer2021bigbirdtransformerslonger}, Reformer \citep{kitaev2020reformerefficienttransformer}.
  \item \textbf{State-space / scan-style families}: S4 \citep{gu2022efficientlymodelinglongsequences}, diagonal/low-rank SSMs (S4D/DSS) \citep{gupta2022diagonalstatespaceseffective}, S5 \citep{smith2023simplifiedstatespacelayers}, H3 \citep{fu2023hungryhungryhipposlanguage}, Mamba and Mamba-2 (SSD) \citep{gu2024mambalineartimesequencemodeling,dao2024transformersssmsgeneralizedmodels}.
  \item \textbf{Retention-based architectures}: RetNet (Retentive Networks) \citep{sun2023retentivenetworksuccessortransformer}.
  \item \textbf{Gated-linear / delta-rule variants}: Gated Linear Attention (GLA) \citep{yang2024gatedlinearattentiontransformers}, DeltaNet \citep{yang2024parallelizinglineartransformersdelta}.
  \item \textbf{Convolutional / token-mixing families}: Hyena \citep{poli2023hyenahierarchylargerconvolutional}, Monarch Mixer \citep{fu2023monarchmixersimplesubquadratic,fu2023flashfftconvefficientconvolutionslong}, Temporal Convolutional Networks (TCN) \citep{bai2018empiricalevaluationgenericconvolutional}, Gated CNNs \citep{dauphin2017languagemodelinggatedconvolutional}.
  \item \textbf{Parallelizable “Min” RNNs}: Min-LSTM and Min-GRU \citep{feng2024rnnsneeded}.
  \item \textbf{RWKV} \citep{peng2023rwkvreinventingrnnstransformer}.
  \item \textbf{mLSTM} (component of the xLSTM architecture) \citep{beck2024xlstmextendedlongshortterm}.
\end{itemize}

\section{Input Aggregation}
\label{sec:input-data-aggregation}

We define the term ``input aggregation'' as follows:

To compute an output $y_t$ from a sequence of data $x_t$ for $t=1,2,3,\cdots,n$, any sequence model must consider all values ${x_i | i \in [1,t]}$ to form a latent representation from which the final prediction $y_t$ can be computed.
Aggregation is thus the process of compressing the sequence of data into a latent representation of constant size, independent of $n$.

\begin{eqnarray}
  \mathbf{h}_t &=& f(\mathbf{x}_1, \mathbf{x}_2, \cdots, \mathbf{x}_t) \\
  y_t &=& g(\mathbf{h}_t)
\end{eqnarray}

Notably, aggregation of input data can be performed in a parallelizable manner.
As long as the computation of $\mathbf{h}_{t+1}$ does not depend on $\mathbf{h}_t$, the computation of $\mathbf{h}_t$ can be parallelized.

However, in a special case of what we defined as ``input aggregation'', the computation of $\mathbf{h}_t$ does depend on $\mathbf{h}_{t-1}$.
\begin{eqnarray}
  \mathbf{h}_1 = f(\mathbf{x}_1),
  \quad
  \mathbf{h}_t = f(\mathbf{h}_{t-1},\,\mathbf{x}_t)
  \quad (t > 1)
\end{eqnarray}

Once again it can be trivially shown that any model with parallelizable input aggregation cannot be recurrence-complete\footnote{See Appendix for proof \ref{sec:parallelizable-input-data-aggregation-precludes-recurrence-completeness}}.

\section{Input-length proportionality}
We will now define a colloquial term under which input aggregation implies sequentially applying a transition function some number of times that is proportional to the input length $n$
and parallel aggregation is either not possible or brittle such that neural architectures are unlikely to learn the task at hand.

Consider the following task as an example:
Consider a sequence of N instructions, each of which modify the state of a particular variable.
The sequence of instructions is interspersed with queries about the current state of a particular variable.
The result of these queries may affect the state of the variable in subsequent instructions or be independent of it.
A language with the following properties may look as follows:
\begin{figure}
\begin{tcolorbox}[
  colback=highlightbg,
  colframe=black!50,
  arc=1mm,
  boxrule=0.4pt,
  left=2pt,
  right=2pt,
  top=2pt,
  bottom=2pt,
]
\begin{flushleft}
  \ttfamily
  \centering
  \fontsize{8pt}{8pt}\selectfont
  \color{green}
  LOAD R1 80\\
  LOAD R2 19\\
  LOAD R3 50\\
  \color{black}
  LABEL L0\\
  \color{green}
  ADD R2 R2\\
  JZ  R2 L3\\
  \begin{flushright}
    \color{black}
    $\leftarrow$ True sequential op
  \end{flushright}\vspace*{-2.0em}
  JMP L2\\
  \color{black}
  LABEL L1\\
  \color{red}
  ADD R1 R2\\
  ADD R2 R1\\
  SUB R1 R2\\
  JZ  R2 L9\\
  JMP L4\\
  \color{black}
  LABEL L2\\
  \color{green}
  ADD R3 R3\\
  \begin{flushright}
    \color{black}
    $\leftarrow$ True sequential op
  \end{flushright}\vspace*{-2.0em}
  JNZ R1 L3\\
  \color{black}
  \dots \\
  \color{black}
  LABEL L12 \\
  \color{red}
  HALT A\\
  \color{black}
  LABEL L13\\
  \color{green}
  HALT B
\end{flushleft}
\end{tcolorbox}
\caption{A sequence of instructions with strict data-dependence.}
\label{fig:strict-data-dependence}
\end{figure}
This sequence of instructions has strict data-dependence, as the control flow depends on the result of the previous instruction.
Instructions may either execute, or be skipped. To achieve strict input-length proportionality, we only allow forward referencing jumps,
which rules out any form of loops, where the amount of true sequential operations required to know the final state of the program may drastically
exceed the input length $n$.
By allowing only forward-referencing jumps, we ensure that the number of truly sequential operations required is strictly less than $n$
but still proportional to $n$ in the general case.
Whether a particular instruction executes cannot be known until the previous instruction has executed.
We mark whether a particular instruction is executed in green and skipped in red.
Whether the program halts in state A or B thus cannot be known until the entire program has been executed.
We refer to this task as the ``Forward-Referencing Jumps Task'' (FRJT).
Given that the task is input-length proportional, it can at least be evaluated in $P$.
Evaluating such a program forces strict data-dependence at all times.
Specifically, the FRJT instantiates a pointer-chasing style dependency: the identity of instruction $t{+}1$ is unknown until instruction $t$ resolves.
Pointer chasing admits $\Omega(n)$ round lower bounds in parallel models with bounded fan-in and limited random access, matching our true-depth lower bounds.

We will explore this task in more detail in the experiments section (section \ref{sec:experiments}).

\section{Input Aggregation Criticality}
\label{sec:input-agg-crit}
We define the term ``input aggregation criticality'' to refer to the maximum sequence length $n$ after which a non-recurrence-complete model can no longer correctly aggregate the input data provided, given
that the task is input-length proportional.

Specifically, aggregation criticality occurs after the number of true-sequential operations $n_{ops}$ that need to be performed to correctly aggregate the input data exceeds the constant number of true sequential operations the model can perform as a function of its layer count $L$.
This number is task- and architecture-specific.
This can be thought of as follows:
\begin{equation}
  n_{taskops} > c \cdot L
\end{equation}
where $c$ is a constant dependent on the model architecture and task.

For any solution that should generalize to arbitrary input-lengths, it is crucial that this threshold is never crossed.

In practice, there are additional constraints that the solution must not only be representable by the model, but also reachable by the training-dynamics.

\section{Relevance to agentic tasks}
For agentic tasks, the input data is typically a stream of observations from the environment that requires aggregation over long time horizons.
As long as there is some probability $p > 0$ that at some time $t$ some task-relevant information cannot be derived directly from a constant-depth transformation of all $x_i$ for $i \in [1,t]$,
input aggregation criticality will be crossed eventually.

We should note that the number of truly sequential operations of a model with parallelizable input aggregation is proportional to the layer count $L$.
As a rule of thumb, as the ratio $n/L$ increases, the quality of the model's formable $h_t$ degrades.

\subsection{Example: Environment observation}
Consider a task where an agent receives a sequence of observations $x_t$ from an environment. Each observation $x_t$ never encodes the full state of the environment, but only a partial view, i.e. editor scroll state, camera field of view, etc.
Additionally, this view may encode ``logical deltas'' instead of absolute state information.
For example, the observation may include changes to a filesystem that a coding agent is operating on, but not a full repetition of the filesystem in its current state, or a "written" record of an event that occurred in the environment.
This problem is compounded by side effects of executing commands, which may modify the state of the system in nontrivial ways without adequate reflection of said changes in the set of observations.

Given $N$ observations encoded in a sequence length of $n$, each of which can either modify or not modify the state of the system depending on the previous state of the system,
the EOP shares the data-dependence properties of the FRJT.
Given input-length proportionality, any model that performs parallelizable input aggregation will cross aggregation criticality eventually, where the quality of the formable embedding degrades rapidly.

Even if a subset of input-data can be aggregated in a parallelizable manner, if there is some probability $p > 0$ that at some point in time $t$ some task-relevant information can only be derived from a latent variable $z_t$ that is computable in $t$ true-sequential operations, the equivalence to the FRJT is still valid.

\section{An Argument from Video}
Compute-efficient time parallelism in practice implies random access into the input sequence, especially in combination with intermediates saved for backpropagation.
In many training procedures, the input sequence is stored in full on device or across the devices participating in the training.
While it may be very feasible to hold a sequence of text tokens in memory at once, this is not the case for video data.
Video data is typically heavily compressed. During decompression, usually only the current frame - and certain key frames necessary for frame interpolation - inflate to full size.
It is unwise to expect e.g. raw h264 codec bytes to be consumable by any feasible neural architecture, therefore the video data must be decompressed to serve as a training example.
For a Full HD 8-bit RGB video at 60 fps, this amounts to $60 \times 1920 \times 1080 \times 3 = 373248000$ bytes = $373.248$ MB per second of video.
To avoid memory explosion, model architectures will have to be ``streamable'' in the same way as video decoding is.
Longer training horizon should neither require more devices, nor significantly more memory beyond what is required to store the compressed video data.
This is practically achievable with left-to-right streaming recurrence.

\section{Experiments}
\label{sec:experiments}
How quickly is input aggregation criticality reached in practice?
We will start tackling this question first by evaluating the performance of a variety of architectures on synthetic tasks that are input-length proportional.

\subsection{Synthetic Tasks}

\subsubsection{Forward-Referencing Jumps Task}
In this section, we will evaluate the performance of a variety of architectures on the Forward-Referencing Jumps Task (FRJT).
For this experiment, we will generate synthetic programs with a maximum depth of $d$.
A program is said to have a depth of $d$ if it contains $d$ labels.
For each $i \in [1, d]$, 8000 programs will be generated.
This is intentional to include short programs to achieve a similar result as teacher forcing a target program state at each point in time \citep{williams1989learning}.
If the model has training signal at all points in time, it is likely to learn the correct transition function.
If a model is expected to learn the correct transition function at high depth without intermediates, learning dynamics are more likely to fail.
By mixing short and long programs, the circuits learned for the short programs will generalize to longer programs, without the need for an auxiliary objective for e.g. register state supervision.

For each block of computation, there exists a jump to a future label. Each block performs two jumps, only one of which will execute depending on a condition evaluation.
Both jumps will forward reference labels. The program halts in either state A or B depending on the final location of the program counter.
To avoid predictability, blocks occasionally jump straight to the terminating state. However, the program is more likely to jump to a future block compared to jumping to the terminating state.
Jumps are also more likely to reference blocks closer to the current block as opposed to blocks further away to maximize runtime and thus true depth.
It is intended that the probability of jumping straight to the end prematurely will accumulate over time and that it is unlikely that later parts of the program will be executed.
However, programs still have approximately 50\% code coverage.
Additionally, the chance of the program halting state being A or B respectively is approximately 50\%.

The dataset will consist of programs and their corresponding labels (Halt A or B, as determined by an interpreter).

The final layer of the model will be a binary-classification head. For time-parallel architectures, only the last point in time will be used for loss-calculation.

\begin{table}[H]
  \centering
  \caption{FRJT Transformer Performance}
  \begin{tabular}{lllll}
    \toprule
    \textbf{$n_{layer}$} & \textbf{$n_{embed}$} & \textbf{Max. Depth} & \textbf{Train Accuracy} & \textbf{Validation Accuracy} \\
    \midrule
    1 & 256 & 4 & 0.98119 & 0.65644 \\
    2 & 256 & 4 & 0.99997 & 0.74406 \\
    3 & 256 & 4 & 1.00000 & 0.74322 \\
    4 & 256 & 4 & 1.00000 & 0.76569 \\
    5 & 256 & 4 & 1.00000 & 0.79047 \\
    6 & 256 & 4 & 0.99589 & 0.81969 \\
    7 & 256 & 4 & 0.99651 & 0.78909 \\
    8 & 256 & 4 & 0.72135 & 0.67644 \\
    \midrule
    1 & 256 & 8 & 0.97504 & 0.59366 \\
    2 & 256 & 8 & 0.99445 & 0.62725 \\
    3 & 256 & 8 & 0.99283 & 0.65612 \\
    4 & 256 & 8 & 0.98863 & 0.62133 \\
    5 & 256 & 8 & 0.97039 & 0.61378 \\
    6 & 256 & 8 & 0.99278 & 0.68308 \\
    \bottomrule
  \end{tabular}
\end{table}

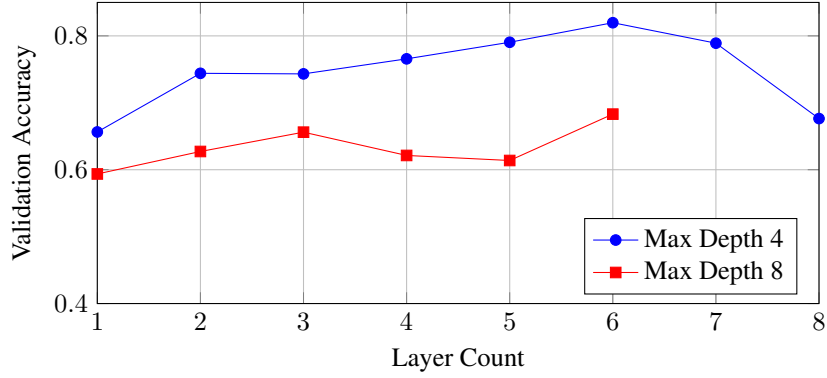
\begin{figure}[H]
    \centering
    \begin{tikzpicture}
    \begin{axis}[
        width=0.8\textwidth,
        height=0.4\textwidth,
        xlabel={Layer Count},
        ylabel={Validation Accuracy},
        xmin=1, xmax=8,
        ymin=0.4, ymax=0.85,
        xtick={1,2,3,4,5,6,7,8},
        legend pos=south east,
        grid=major,
        ]
        \addplot[mark=*,blue] coordinates {
            (1,0.65644)
            (2,0.74406)
            (3,0.74322)
            (4,0.76569)
            (5,0.79047)
            (6,0.81969)
            (7,0.78909)
            (8,0.67644)
        };
        \addplot[mark=square*,red] coordinates {
            (1,0.59366)
            (2,0.62725)
            (3,0.65612)
            (4,0.62133)
            (5,0.61378)
            (6,0.68308)
        };
        \legend{Max Depth 4, Max Depth 8}
    \end{axis}
    \end{tikzpicture}
    \caption{Transformer validation accuracy as a function of layer count for different maximum depths}
\end{figure}

\begin{table}[H]
  \centering
  \caption{FRJT Mamba Performance}
  \begin{tabular}{lllll}
    \toprule
    \textbf{$n_{layer}$} & \textbf{$n_{embed}$} & \textbf{Max. Depth} & \textbf{Train Accuracy} & \textbf{Validation Accuracy} \\
    \midrule
    1 & 256 & 4 & 0.85194 & 0.84331 \\
    2 & 256 & 4 & 1.00000 & 0.94431 \\
    3 & 256 & 4 & 1.00000 & 0.88203 \\
    4 & 256 & 4 & 1.00000 & 0.93219 \\
    5 & 256 & 4 & 1.00000 & 0.91258 \\
    6 & 256 & 4 & 1.00000 & 0.87392 \\
    7 & 256 & 4 & 0.99997 & 0.966 \\
    8 & 256 & 4 & 1.00000 & 0.98075 \\
    \midrule
    1 & 256 & 8 & 0.80129 & 0.79172 \\
    2 & 256 & 8 & 1.00000 & 0.85813 \\
    3 & 256 & 8 & 1.00000 & 0.80512 \\
    4 & 256 & 8 & 1.00000 & 0.90544 \\
    6 & 256 & 8 & 0.99672 & 0.91402 \\
    8 & 256 & 8 & 0.99486 & 0.91498 \\
    10 & 256 & 8 & 0.99999 & 0.92641 \\
    12 & 256 & 8 & 1.00000 & 0.93252 \\
    14 & 256 & 8 & 1.00000 & 0.94788 \\
    \midrule
    1 & 256 & 16 & 0.73724 & 0.71975 \\
    8 & 256 & 16 & 0.98847 & 0.83139 \\
    12 & 256 & 16 & 0.99433 & 0.85069 \\
    16 & 256 & 16 & 0.98949 & 0.87901 \\
    \bottomrule
  \end{tabular}
\end{table}

\begin{figure}[H]
    \centering
    \begin{tikzpicture}
    \begin{axis}[
        width=0.8\textwidth,
        height=0.4\textwidth,
        xlabel={Layer Count},
        ylabel={Validation Accuracy},
        xmin=1, xmax=16,
        ymin=0.7, ymax=1.0,
        xtick={1,2,3,4,5,6,7,8,9,10,11,12,13,14,15,16},
        legend pos=south east,
        grid=major,
        ]
        \addplot[mark=*,blue] coordinates {
            (1,0.84331)
            (2,0.94431)
            (3,0.88203)
            (4,0.93219)
            (5,0.91258)
            (6,0.87392)
            (7,0.966)
            (8,0.98075)
        };
        \addplot[mark=square*,red] coordinates {
            (1,0.79172)
            (2,0.85813)
            (3,0.80512)
            (4,0.90544)
            (6,0.91402)
            (8,0.91498)
            (10,0.92641)
            (12,0.93252)
            (14,0.94788)
        };
        \addplot[mark=triangle*,green] coordinates {
            (1,0.71975)
            (8,0.83139)
            (12,0.85069)
            (16,0.87901)
        };
        \legend{Max Depth 4, Max Depth 8, Max Depth 16}
    \end{axis}
    \end{tikzpicture}
    \caption{Mamba validation accuracy as a function of layer count for different maximum depths}
\end{figure}
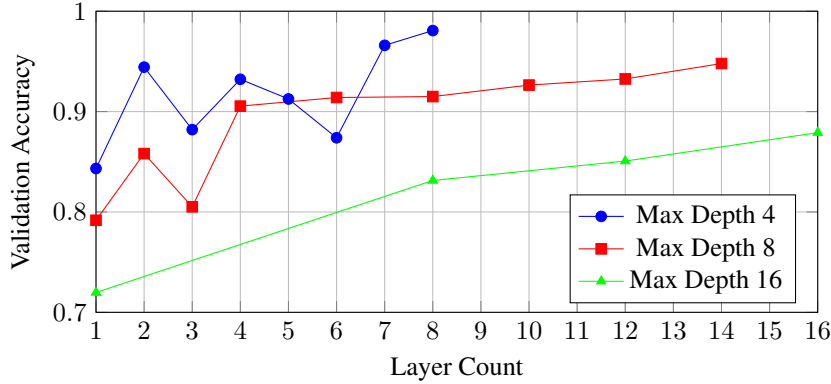

\begin{table}[H]
  \centering
  \caption{FRJT LSTM Performance}
  \begin{tabular}{lllll}
    \toprule
    \textbf{$n_{layer}$} & \textbf{$n_{embed}$} & \textbf{Max. Depth} & \textbf{Train Accuracy} & \textbf{Validation Accuracy} \\
    \midrule
    1 & 256 & 4 & 0.99905 & 0.95981 \\
    1 & 256 & 8 & 0.99119 & 0.96569 \\
    1 & 256 & 16 & 0.94628 & 0.94663 \\
    1 & 256 & 24 & 0.92969 & 0.90238 \\
    1 & 256 & 32 & 0.87409 & 0.85705 \\
    \bottomrule
  \end{tabular}
\end{table}

\begin{figure}[H]
    \centering
    \begin{tikzpicture}
    \begin{axis}[
        width=0.8\textwidth,
        height=0.4\textwidth,
        xlabel={Maximum Depth},
        ylabel={Validation Accuracy},
        xmin=4, xmax=32,
        ymin=0.8, ymax=1.0,
        xtick={4,8,16,24,32},
        legend pos=north east,
        grid=major,
        ]
        \addplot[mark=*,blue] coordinates {
            (4,0.95981)
            (8,0.96569)
            (16,0.94663)
            (24,0.90238)
            (32,0.85705)
        };
        \legend{LSTM (1 Layer)}
    \end{axis}
    \end{tikzpicture}
    \caption{LSTM validation accuracy as a function of maximum depth}
\end{figure}
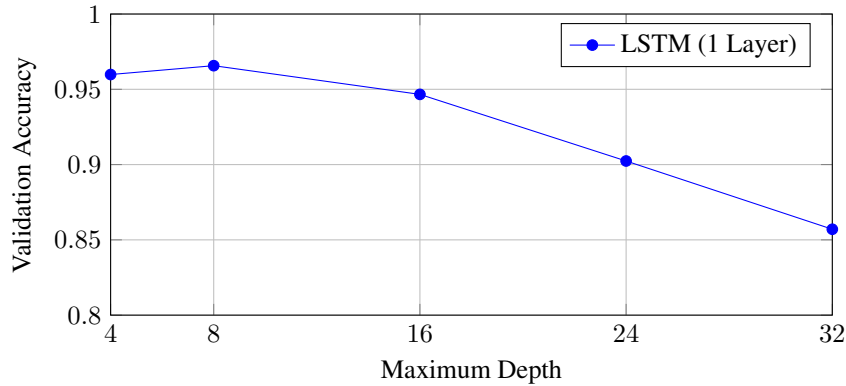

Additionally, it should be noted that every architecture tested here showed signs of overfitting (increasing validation loss after a certain point) in every run except for LSTM-runs, where
only the max. depth 32 run showed slight signs of increased validation loss after saturating at around 87\% training accuracy.

Mamba is able to learn the task up until a critical N after which layer count has to be increased proportionally.
Even the largest Mamba run with 16 layers at depth 16 scores (validation accuracy 0.87901) worse than the 1-layer LSTM (0.94663) by a substantial amount
while Mamba at 16 layers cannot be argued to be compute-efficient for the task at hand given increasing wall-time cost.

\subsubsection{Maze Position Tracking Task}
As a simpler task that is input-length proportional, we will define the ``Maze Position Tracking Task''.
The maze is a 2D grid of size $32 \times 32$.
The task is to predict the position of an agent given the sequence of movements in the maze.
The maze will be fixed for all individual examples such that the model can learn the layout of the maze as a prior.
The task is formulated as a regression task, where the model must predict the two components of the final position.
Predictions are made after each movement and thus the correct position is teacher-forced at each step.

We distinguish between two variants of the task:
\subsubsection{Unwithheld Maze Position Tracking Task}
In this variant, the model receives not only the sequence of performed movements, but also whether the movement resulted in an unchanged position.
In this variant, solving the task is as simple as counting the number of movements in each direction, excluding the movements that resulted in an unchanged position.
Even if the sequence model is replaced by a cross-temporal sum, the task can still be easily solved by the model with 100\% validation accuracy.
It can be assumed that essentially any sequence model will be able to solve this task in a fully length-generalizable fashion.
An LSTM achieves full 100\% validation accuracy and so does a transformer.

\begin{figure}[H]
  \centering
  \includegraphics[width=0.4\textwidth]{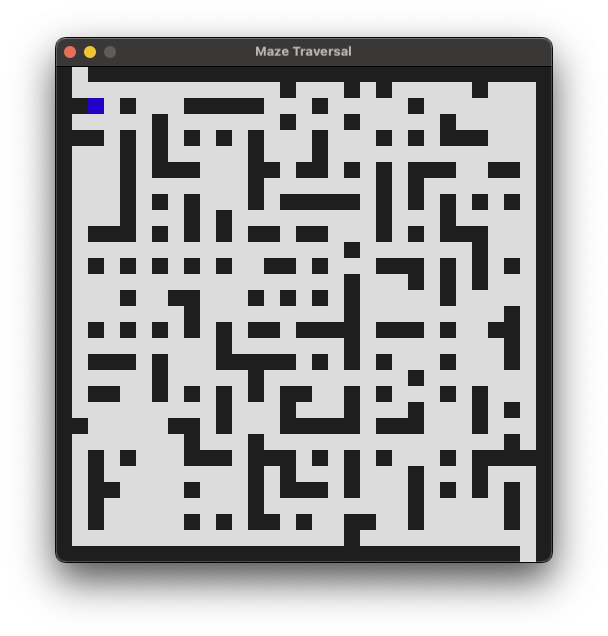}
  \caption{Visualization of the Maze Used in all Experiments}
\end{figure}

Example data of this task may look as follows:
\begin{flushleft}
  \ttfamily
  \fontsize{8pt}{8pt}\selectfont
  \color{blue}LEFT\color{black}, \color{green}LEFT\color{black} \\
  \color{blue}LEFT\color{black}, \color{red}UNCHANGED\color{black} \\
  \color{blue}RIGHT\color{black}, \color{green}RIGHT\color{black} \\
  \color{blue}UP\color{black}, \color{red}UNCHANGED\color{black} \\
  \color{blue}DOWN\color{black}, \color{green}DOWN\color{black} \\
  \color{blue}LEFT\color{black}, \color{green}LEFT\color{black} \\
  \color{blue}LEFT\color{black}, \color{red}UNCHANGED\color{black} \\
  \color{blue}RIGHT\color{black}, \color{green}RIGHT\color{black} \\
  \color{blue}RIGHT\color{black}, \color{red}UNCHANGED\color{black} \\
\end{flushleft}
However, if we sparsely withhold the resulting movement with some probability $p$, the task now requires full input-length proportional reasoning.

\subsubsection{Withheld Maze Position Tracking Task}
In this variant, the model still receives the set of performed movements, however sometimes with a probability $p$ the resulting movement is withheld.
Depth is defined as the number of movements that are withheld.
It should be noted that not each occurrence of a withheld movement is equally difficult.
A withheld movement may result in a change in position, or it may result in no change in position or it may be in a sequence of movements that cancel each other out for which it may be possible to build computational shortcuts.
\pagebreak

Example data of this task may look as follows:
\begin{flushleft}
  \ttfamily
  \fontsize{8pt}{8pt}\selectfont
  \color{blue}LEFT\color{black}, \color{green}LEFT\color{black} \\
  \color{blue}LEFT\color{black}, \color{red}UNCHANGED\color{black} \\
  \color{blue}RIGHT\color{black}, \color{green}RIGHT\color{black} \\
  \color{blue}UP\color{black}, \color{orange}WITHHELD\color{black} \\
  \color{blue}DOWN\color{black}, \color{green}DOWN\color{black} \\
  \color{blue}LEFT\color{black}, \color{green}LEFT\color{black} \\
  \color{blue}LEFT\color{black}, \color{red}UNCHANGED\color{black} \\
  \color{blue}RIGHT\color{black}, \color{red}UNCHANGED\color{black} \\
  \color{blue}RIGHT\color{black}, \color{orange}WITHHELD\color{black} \\
  \color{blue}LEFT\color{black}, \color{green}LEFT\color{black} \\
\end{flushleft}

\begin{table}[H]
  \centering
  \caption{Withheld Maze Position Tracking Task Performance}
  \begin{tabular}{llllll}
    \toprule
    \textbf{Architecture} & \textbf{$p_{withheld}$} & \textbf{$n_{layer}$} & \textbf{$n_{embed}$} & \textbf{Depth} & \textbf{Validation Accuracy} \\
    \midrule
    Sum & 20\% & 1 & 256 & 32 & 0.3785 \\
    LSTM & 20\% & 1 & 256 & 32 & 0.9942 \\
    Transformer & 20\% & 1 & 256 & 32 & 0.6105 \\
    Transformer & 20\% & 2 & 256 & 32 & 0.6388 \\
    Transformer & 20\% & 3 & 256 & 32 & 0.9020 \\
    Transformer & 20\% & 4 & 256 & 32 & 0.9321 \\
    \midrule
    Sum & 20\% & 1 & 256 & 64 & 0.2717 \\
    LSTM & 20\% & 1 & 256 & 64 & 0.9445 \\
    Transformer & 20\% & 1 & 256 & 64 & 0.4571 \\
    Transformer & 20\% & 2 & 256 & 64 & 0.6453 \\
    Transformer & 20\% & 3 & 256 & 64 & 0.7547 \\
    Transformer & 20\% & 4 & 256 & 64 & 0.8271 \\
    \bottomrule
  \end{tabular}
\end{table}

We again note for clarity that depth does not equal sequence length here, with the sequence length being significantly longer than depth, as depth is the count of withheld movements. The sequence consists of both intent and the result feedback represented as a token each.

\begin{figure}[H]
  \centering
  \includegraphics[width=0.6\textwidth]{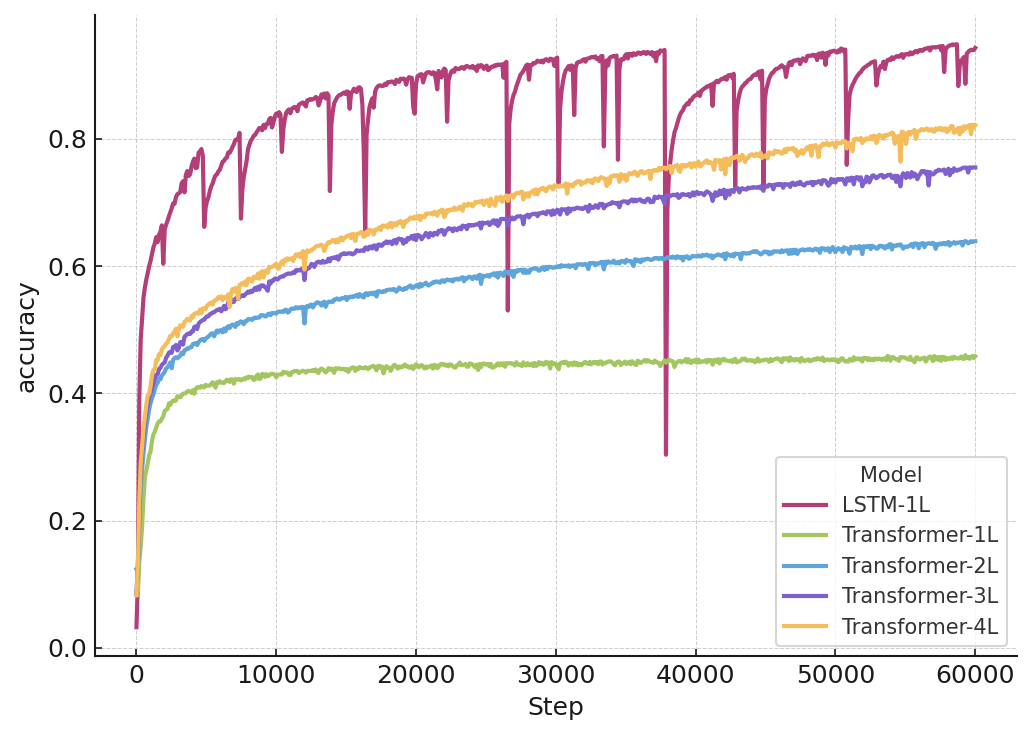}
  \caption{Validation accuracy for runs at depth 64}
\end{figure}

We note the LSTM's ``spiky'' pattern in accuracy, which coincide with loss spikes. This behavior is expected in heavily discretized objectives with sharp decision boundaries.
These spikes are followed by subsequent fast ``catch-up'', often to a higher accuracy than before. While not included in this comparison for fairness reasons, we note that the 1-Layer LSTM run - despite many spikes - continues to improve and reaches 99.17\% validation accuracy at depth 64 after 200,000 steps.

\subsection{Practical Tasks}
We will now explore a practical example of an environment observation task that is input-length proportional.

\subsubsection{Coding Agent Task}
Consider a coding agent that observes the state of a computer through a series of text-based console-commands and their respective outputs.
Assume for the sake of simplicity that every command and every file-system modification compared to the system's initial clean state is performed exclusively by the agent.
Every line of code can be assumed to have been written by the agent itself.

Depending on the exact representation of the agent's observation, modifications may either occur in the form of diff-strings,
or in the form of partially rendered text-file contents through position seeking functionality (i.e. text editor).

Suppose the agent's only view of a file is the sequence of diffs $\delta_1, \delta_2, \dots, \delta_n$ each of which ``patches'' the current file state.

Formally, the file state $S_t$ is computed as
\begin{equation}
  S_0 = \text{empty file},\quad
  S_t = \mathbf{apply}(S_{t-1},\,\delta_t)
  \quad(t=1,\dots,n)
\end{equation}

Patch representations are known to be applicable in parallel in many cases, however a clean sequence of diffs is rarely observed in practice.
In practice, executing commands induces side-effects to the state of the file system, which we will assume will not be directly observed by the agent.
Doing so would require intercepting file I/O system calls, which is not feasible in practice due to API call verbosity.
This does induce strict data-dependence into the problem because now we do need to track and materialize a sequentially consistent history of the file system to answer questions about
the command's behavior at time $t$---even if only \textit{semantically} as opposed to literal neural emulation.

Formally, we now have two stages, one being patch application given by partial observations and second being an optionally present side-effect, which accepts the state of the file system at time $t$ and returns a new state of the file system at time $t+1$.

\begin{equation}
  S_t = \mathbf{sideeffect}(\mathbf{apply}(S_{t-1},\,\delta_t))
  \quad(t=1,\dots,n)
\end{equation}

This problem in particular however can be side-stepped through a full-print of the file's contents initiated by the agent.

However, given that we are frequently ``rendering'' fragments of our code-base due to the agent's inability to aggregate its history,
we are confronted with the fact that the agent cannot possess a true representation of the code-base's trajectory.
We are left with a strictly atemporal representation of the code-base, void of true cross-temporal context as time approaches infinity.

\subsubsection{Diff-Inflate-Bench}
We propose a benchmark for language models where the model is asked to produce the final state of a file given a sequence of git diffs.
The final state is given to a judge, which is tasked with determining whether the candidate prediction is functionally equivalent to the ground truth final state.
Formatting and style differences are allowed, as well as non-semantic changes such as order of function declarations. Only semantically relevant changes are penalized.
Additionally, candidate predictions were manually inspected for surface-level intactness compared to the ground truth final state to rule out context-length induced truncation or other defects.
We chose N strictly incremental patches from the initial state. For $N=1$, the task is equivalent to a ``copy'' operation while stripping certain symbols.
To estimate the fraction of correct renderings, we obtain 48 samples consisting of the rendering prediction and the judge's verdict.
For our testing, we deliberately use patches from the tinygrad project due to its unconventional implementation approaches, high code density, high amounts of overlapping patches and optically unpredictable code \citep{tinygrad}.
We notice a concerning trend that for other repositories, performance does not degrade as rapidly as it does for tinygrad.
We believe this is due to the fact that for codebases of lower complexity, the model is able to ``guess'' the correct final state and aggregate primarily by plausibility as opposed to true state tracking.

Thinking budget is reduced to the minimum allowed by the respective model, as the benchmark specifically aims to measure the model's native ability to perceive the state of the codebase.
We explicitly acknowledge that with sufficient chain-of-thought thinking the model could externalize all state tracking needed to facilitate correct inflation of diffs, however this is not the point of the benchmark.

We evaluate on OpenAI's gpt-5-codex \& gpt-5-mini models, Google's gemini-2.5-pro and Anthropic's claude-sonnet-4.5 model.
``gpt-5-codex'' was used exclusively as the judge to ensure consistency of evaluation across models.

We observe a strict downward trend in performance as the number of diffs increases.
This implies that as the number of patches increases, the codebase becomes increasingly opaque to the model.
We note however that claude-sonnet-4.5 is able to better maintain accuracy throughout.
The task is by no means impossible to solve in parallelizable fashion, as mere patch application without side-effects is insufficient to invoke true input-length proportionality.
Additionally, the act of generating the final state of the code may help to sufficiently decompose the problem over many tokens given partial results in ways which are not necessarily comparable to the more immediate perception requirements needed during agentic workloads,
so we note that this benchmark is likely an insufficient measurement of true codebase embedding quality.
We note that given a sorted set of diffs, simply repeating non-deleted lines from the latest diff that appears to start with the current file region to be rendered would represent a parallelizable approximation
that is learnable by a transformer. Additionally, the surrounding context allows the model to ``cheat'', as it effectively leaks the state of surrounding regions at a given point in time,
eliminating the need for true state-tracking for the regions covered.

\begin{figure}[H]
  \centering
  \begin{tikzpicture}
    \definecolor{codex}{HTML}{5DA5DA}
    \definecolor{gemini}{HTML}{B33E78}
    \definecolor{mini}{HTML}{5C9863}
    \definecolor{claude}{HTML}{F4BC5A}
    \begin{axis}[
      width=0.65\linewidth, height=6.5cm,
      xlabel={Number of patches},
      ylabel={Functionally equivalent renderings (\%)},
      ymin=0, ymax=105, xmin=0, xmax=150,
      grid=both,
      major grid style={dashed,gray!40},
      minor grid style={dashed,gray!20},
      legend pos=outer north east,
      legend cell align=left,
      tick align=outside,
      tick style={black},
      every axis plot/.append style={
        semithick, mark=*, mark size=1.8pt
      },
    ]
    \addplot+[color=codex, mark options={fill=codex, draw=codex}]
    coordinates {
    (1,97.92) (2,97.92) (4,97.92) (8,89.58) (16,85.42) (24,79.17)
    (32,66.67) (48,64.58) (64,58.33) (72,52.08) (80,64.58) (88,56.25)
    (96,47.92) (104,35.42) (112,35.42) (128,29.17) (136,27.08) (148,20.83)
    }; \addlegendentry{gpt-5-codex}
    \addplot+[color=gemini, mark options={fill=gemini, draw=gemini}]
    coordinates {
    (1,97.92) (2,97.92) (4,95.83) (8,83.33) (16,85.42) (24,81.25)
    (32,70.83) (48,70.83) (64,75.00) (72,56.25) (80,72.92) (88,45.83)
    (96,56.25) (104,37.50) (112,41.67) (128,43.75) (136,47.92) (148,25.00)
    }; \addlegendentry{gemini-2.5-pro}
    \addplot+[color=mini, mark options={fill=mini, draw=mini}]
    coordinates {
    (1,100.00) (2,95.83) (4,91.67) (8,81.25) (16,60.42) (24,47.92)
    (32,35.42) (48,27.08) (64,22.92) (72,27.08) (80,12.50) (88,16.67)
    (96,6.25) (104,2.08) (112,10.42) (128,4.17) (136,10.42) (148,8.33)
    }; \addlegendentry{gpt-5-mini}
    \addplot+[color=claude, mark options={fill=claude, draw=claude}]
    coordinates {
    (1,100.00) (2,97.92) (4,100.00) (8,89.58) (16,95.74) (24,89.58)
    (32,83.33) (48,91.67) (64,91.67) (72,89.58) (80,93.62) (88,95.83)
    (96,95.83) (104,100.00) (112,84.78) (128,85.11) (136,78.72) (148,97.78)
    }; \addlegendentry{claude-4.5-sonnet}
    \end{axis}
  \end{tikzpicture}
  \caption{Diff-Bench performance}
  \label{fig:diff-bench}
\end{figure}
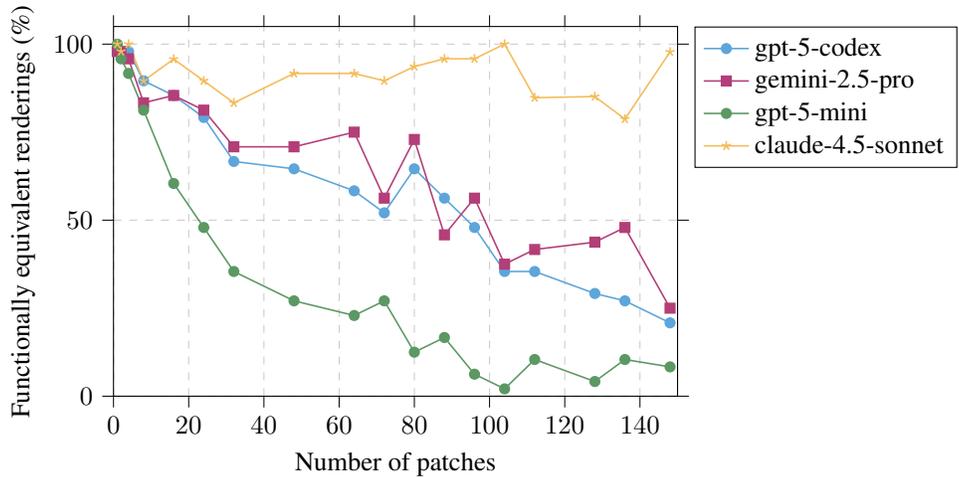

To test whether this is the approximation claude-4.5-sonnet has learned, we configure git to produce minimal diffs (U0) without redundant context, moving the task away from ``reflected in observations'' towards ``withheld, but inferable''.
The result is a collapse in accuracy across all models, including claude-4.5-sonnet.
We note however that relying on solely line-count arithmetic for inferring patch placement is potentially harsh. We therefore also evaluate diffs with exactly one line of context (U1),
allowing placement to be inferred from context while still reducing the likelihood of ``leaking too many lines''.

\begin{figure}[H]
  \centering
  \begin{tikzpicture}
    \definecolor{mini}{HTML}{5C9863}
    \definecolor{gemini}{HTML}{B33E78}
    \definecolor{claude}{HTML}{F4BC5A}
    \definecolor{codex}{HTML}{5DA5DA}
    \begin{axis}[
      width=0.65\linewidth, height=6.5cm,
      xlabel={Number of patches},
      ylabel={Functionally equivalent renderings (\%)},
      ymin=0, ymax=105, xmin=0, xmax=150,
      grid=both,
      major grid style={dashed,gray!40},
      minor grid style={dashed,gray!20},
      legend pos=outer north east,
      legend cell align=left,
      tick align=outside,
      tick style={black},
      every axis plot/.append style={semithick, mark=*, mark size=1.8pt},
    ]
    \addplot+[color=codex, mark options={fill=codex, draw=codex}]
    coordinates {
    (1,100.00) (2,89.58) (4,89.58) (8,60.42) (16,39.58) (24,20.83)
    (32,18.75) (48,14.58) (64,8.33) (72,8.33) (80,8.33) (88,6.25)
    (96,4.17) (104,4.17) (112,6.25) (128,0.00) (136,0.00) (148,0.00)
    }; \addlegendentry{gpt-5-codex}
    \addplot+[color=mini, mark options={fill=mini, draw=mini}]
    coordinates {
    (1,100.00) (2,77.08) (4,66.67) (8,29.17) (16,8.33) (24,4.17)
    (32,0.00) (48,2.08) (64,2.08) (72,0.00) (80,2.08) (88,0.00)
    (96,0.00) (104,0.00) (112,0.00) (128,0.00) (136,0.00) (148,0.00)
    }; \addlegendentry{gpt-5-mini}
    \addplot+[color=gemini, mark options={fill=gemini, draw=gemini}]
    coordinates {
    (1,100.00) (2,91.67) (4,91.67) (8,68.75) (16,50.00) (24,41.67)
    (32,18.75) (48,16.67) (64,16.67) (72,6.25) (80,10.42) (88,12.50)
    (96,0.00) (104,4.17) (112,0.00) (128,0.00) (136,2.08) (148,0.00)
    }; \addlegendentry{gemini-2.5-pro}
    \addplot+[color=claude, mark options={fill=claude, draw=claude}]
    coordinates {
    (1,100.00) (2,93.75) (4,85.42) (8,72.92) (16,58.33) (24,50.00)
    (32,35.42) (48,31.25) (64,14.58) (72,12.50) (80,31.25) (88,14.58)
    (96,8.33) (104,14.58) (112,8.33) (128,6.25) (136,8.33) (148,4.17)
    }; \addlegendentry{claude-4.5-sonnet}
    \end{axis}
  \end{tikzpicture}
  \caption{Diff-Bench performance (U0)}
\label{fig:diff-bench-u0}
\end{figure}
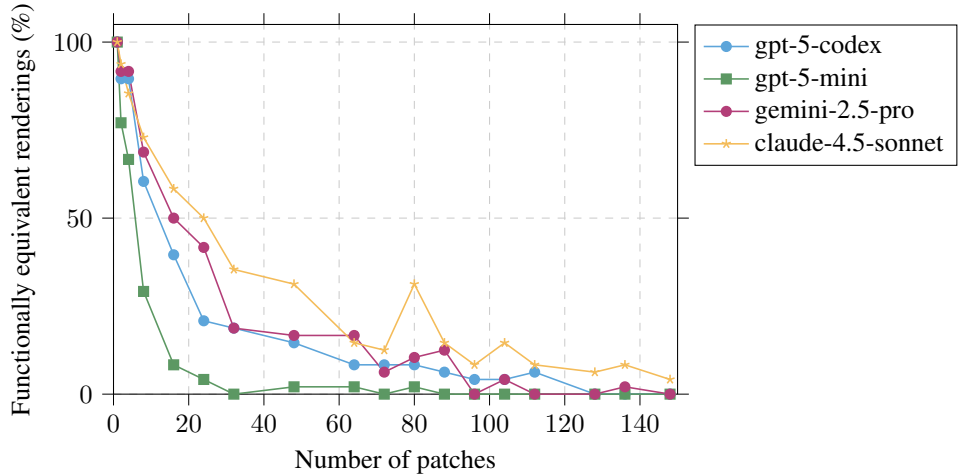

\begin{figure}[H]
  \centering
  \begin{tikzpicture}
  \definecolor{claude}{HTML}{F4BC5A}
  \definecolor{gemini}{HTML}{B33E78}
  \definecolor{codex}{HTML}{5DA5DA}
  \definecolor{mini}{HTML}{5C9863}
  \begin{axis}[
    width=0.65\linewidth, height=6.5cm,
    xlabel={Number of patches},
    ylabel={Functionally equivalent renderings (\%)},
    ymin=0, ymax=105, xmin=0, xmax=150,
    grid=both,
    major grid style={dashed,gray!40},
    minor grid style={dashed,gray!20},
    legend pos=outer north east,
    legend cell align=left,
    tick align=outside,
    tick style={black},
    every axis plot/.append style={semithick, mark=*, mark size=1.8pt},
  ]
  \addplot+[color=codex, mark options={fill=codex, draw=codex}]
  coordinates {
  (1,100.00) (2,97.92) (4,93.75) (8,89.58) (16,75.00) (24,72.92)
  (32,50.00) (48,54.17) (64,43.75) (72,33.33) (80,45.83) (88,35.42)
  (96,25.00) (104,20.83) (112,25.00) (128,16.67) (136,16.67) (148,10.42)
  }; \addlegendentry{gpt-5-codex}
  \addplot+[color=gemini, mark options={fill=gemini, draw=gemini}]
  coordinates {
  (1,97.92) (2,95.83) (4,95.83) (8,81.25) (16,81.25) (24,75.00)
  (32,66.67) (48,62.50) (64,62.50) (72,50.00) (80,41.67) (88,45.83)
  (96,50.00) (104,29.17) (112,22.92) (128,29.17) (136,22.92) (148,12.50)
  }; \addlegendentry{gemini-2.5-pro}
  \addplot+[color=mini, mark options={fill=mini, draw=mini}]
  coordinates {
  (1,100.00) (2,91.67) (4,91.67) (8,72.92) (16,41.67) (24,37.50)
  (32,16.67) (48,2.08) (64,8.33) (72,6.25) (80,6.25) (88,4.17)
  (96,0.00) (104,0.00) (112,0.00) (128,2.08) (136,0.00) (148,0.00)
  }; \addlegendentry{gpt-5-mini}
  \addplot+[color=claude, mark options={fill=claude, draw=claude}]
  coordinates {
  (1,97.92) (2,97.92) (4,100.00) (8,89.58) (16,91.67) (24,83.33)
  (32,72.92) (48,77.08) (64,79.17) (72,81.25) (80,85.42) (88,64.58)
  (96,60.42) (104,68.75) (112,68.75) (128,60.42) (136,39.58) (148,50.00)
  }; \addlegendentry{claude-4.5-sonnet}
  \end{axis}
  \end{tikzpicture}
  \caption{Diff-Bench performance (U1)}
  \label{fig:diff-bench-u1}
\end{figure}
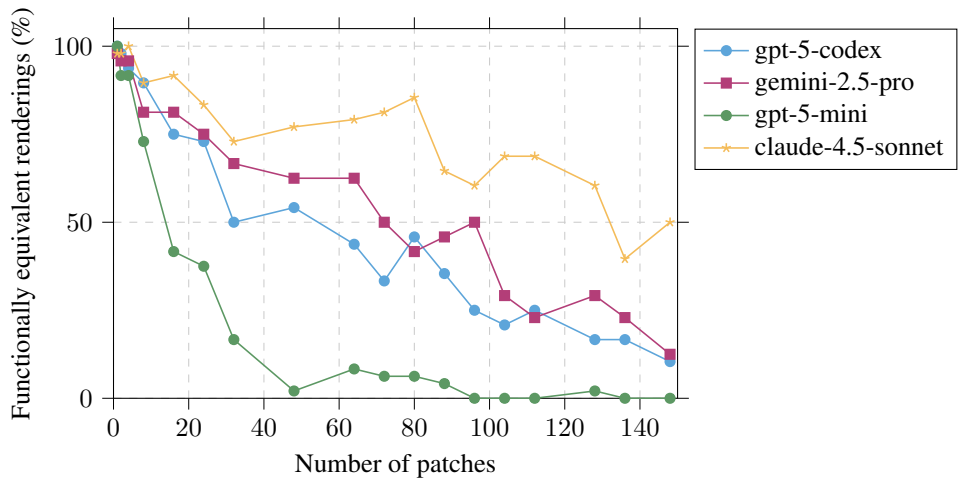

We do not argue that U0 or U1 representations should be used in practice in a long-context agentic setting; however, the specific failure mode exhibited here should be understood to apply wherever actions result in effects that are not sufficiently reflected in the observations and require ``mental bookkeeping''.

\section{The nuance of Chain of Thought}
\label{sec:nuance-cot}
For autoregressive language models, chain of thought-like mechanisms can serve as a form of ``memory'' to the model, which it can learn to utilize to externalize state tracking operations.
In recent literature, verifiable rewards have been used to optimize for better chains of thought, improving performance on downstream tasks.
For tasks where the tracked state is trivially represented in text, Transformers used with chain-of-thought prompting can solve input-length proportional problems in theory and practice.
However, input aggregation criticality remains a fundamental perceptual limitation of non-recurrence-complete models whenever the next token is generated from the current context.
Any prolonged chain of thought is therefore under constant pressure such that the number of sequential operations required to decode said context remains constant to avoid loss of information.
It is misleading to assume a truly unbounded state size even if the model can losslessly attend to all of the growing context.
An unbounded state size is only plausible if input aggregation can be performed in embarrassingly parallel fashion for the task at hand.
Additionally, a growing one-dimensional sequence of tokens as the model's state representation is potentially unsustainable because of eventual memory limitations.
The need to truncate and compact the sequence is effectively the same constraint recurrent neural networks face for cell capacity management,
where the cell has to ``learn to forget'' \citep{818041}, while remaining fully differentiable.

\section{Recurrence-Complete Frame-based Action Models}
Given the fundamental nature of the aggregation criticality problem, we suggest recurrence-completeness is a necessary property to reliably solve input-length proportional tasks.
However, existing recurrence-complete models are known to exhibit different issues, preventing them from being scaled to the degree that transformer-based models can.

Thus, we propose a scheme of recurrence-complete models that are partially parallelizable.
Instead of attempting full cross-temporal parallelization, we concede a natural sequential dependency of time,
however, the degree of sequentiality may not necessarily be one-to-one with the input-length.
This admits that some operations can be parallelized, while the overarching flow of time remains truly sequential.

We thus introduce a notion of ``frames'', which is distinctly different from traditional sequence modeling, which treats
the input-space as a one-dimensional sequence of tokens.
Instead, one frame is a fixed-length sequence that is asserted to be a complete representation of the input at time $t$.
Multiple frames form the sequence of observations $x_1, x_2, \dots, x_n$.

For example, a frame may be a 2D grid of pixels (e.g. an image) or a 2D grid of characters (e.g. a text terminal capture).

These frames are consumed by a ``frame-head'' that is tasked with embedding the frame into a latent space.

In our experiments, we employ a standard transformer with full attention and pooling operations to reduce the logical sequence length to learn tokenization.
Additionally, because the transformer is a time-parallel architecture, we reduce over the sequence of tokens with an LSTM to re-allocate all time-parallel compute to aid embedding formation.
Because there is no risk of leaking information within the frame-local sequence, as it is fully observable and not part of the prediction objective, we can safely employ pooling operations as opposed to relying on tokenization.

The frame-embeddings are then fed into the main sequence model, for which we employ a residual stack of LSTM cells interspersed with MLPs.

\subsection{The data}
To train models in unsupervised fashion, large amounts of labeled data are required. In practice, the only sufficiently large data source has been web text.
This has limited language models to consume a one dimensional sequence of tokens with the objective being to predict exactly the next token.

However, we point to a largely untapped source of labeled data, from which a sequence of actions can be generated - notably, git history.
Git version control history is a per-commit sequence of deltas, which can be applied in order to produce not only the current state of the code base,
but also all intermediate states.

From this information it is possible to reconstruct plausible text-editor keystrokes, which produce the current state of the code base.
We do this in a fully automated fashion using a custom shell and terminal muxer together with a tiny text editor, tools which optically replicate
common tooling such as bash and vim. By side-stepping expensive codepaths such as the xterm terminal emulator and manually populating
the character frame buffer appropriately, our C++ implementation can generate up to $200,000$ actions/s.
Additionally, we serialize this data with our own custom file format internally referred to as \texttt{termstreamxz} which is a lossless compression scheme
for terminal recordings. In practice we can achieve up to $300\times$ compression ratios with run-length, sliding-window reference and palette compression
by leveraging tight bit-packing.

This data should be thought of as a---though imperfect---substitute for actual terminal recordings from a hypothetical user typing out the repository to the extent
that the granularity of changes by commits reveals.

\begin{figure}[H]
  \centering
  \includegraphics[width=0.8\textwidth]{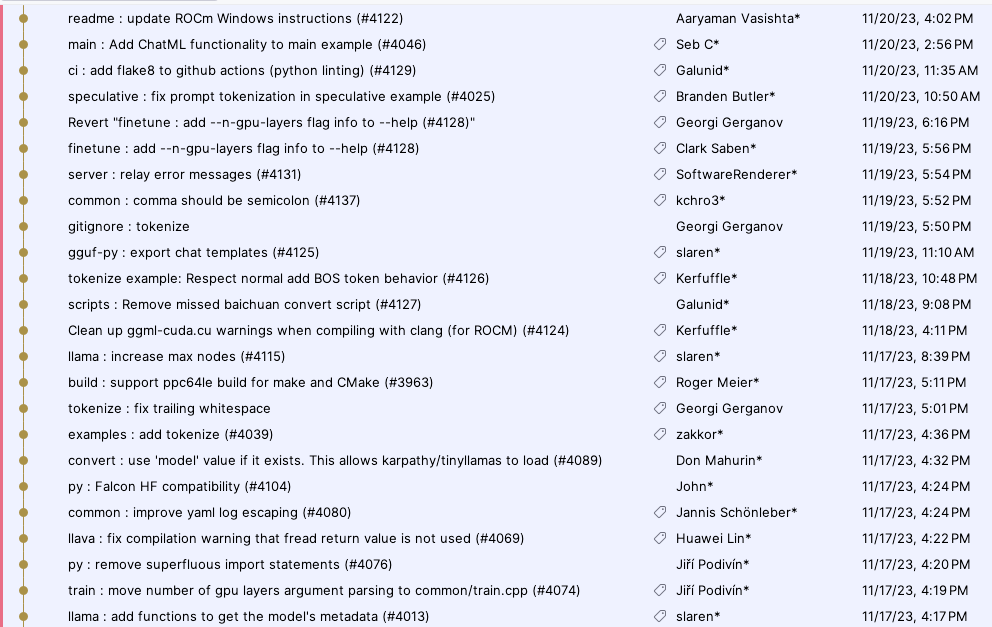}
  \caption{A git commit history with messages and author information displayed in graphical form.}
  \label{fig:git-history}
\end{figure}

\begin{figure}[H]
  \centering
  \includegraphics[width=0.8\textwidth]{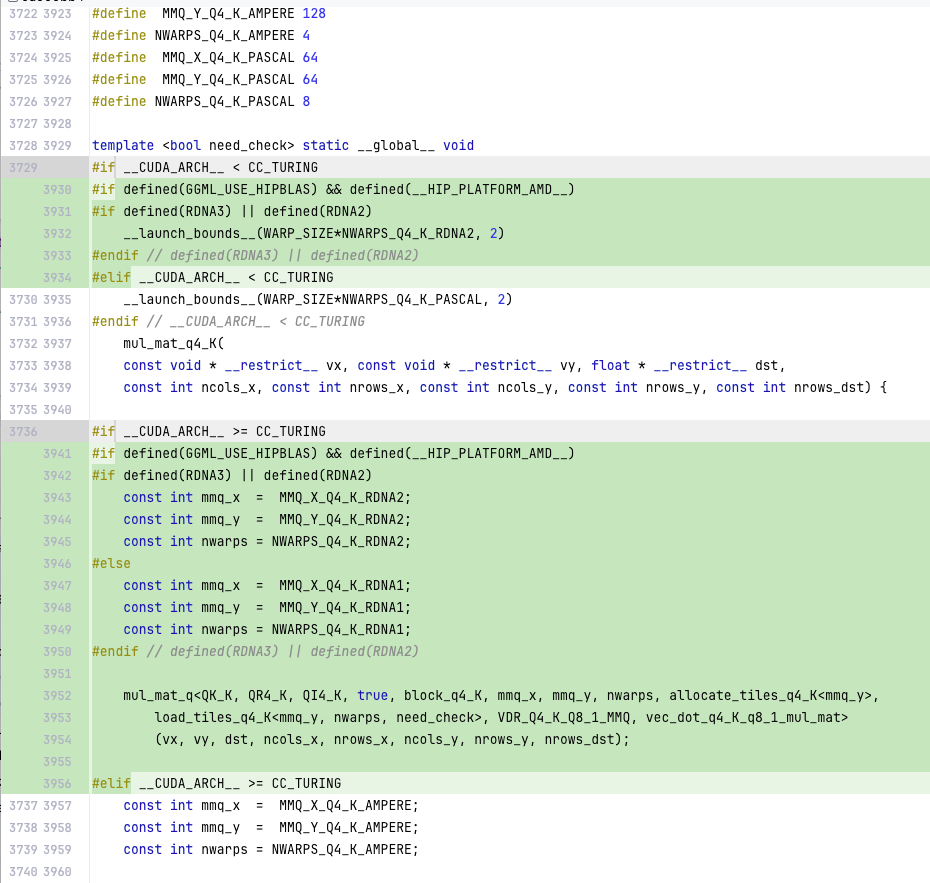}
  \caption{A unified git diff rendered in graphical form.}
  \label{fig:git-history-diff}
\end{figure}

\begin{figure}[H]
  \centering
  \includegraphics[width=0.8\textwidth]{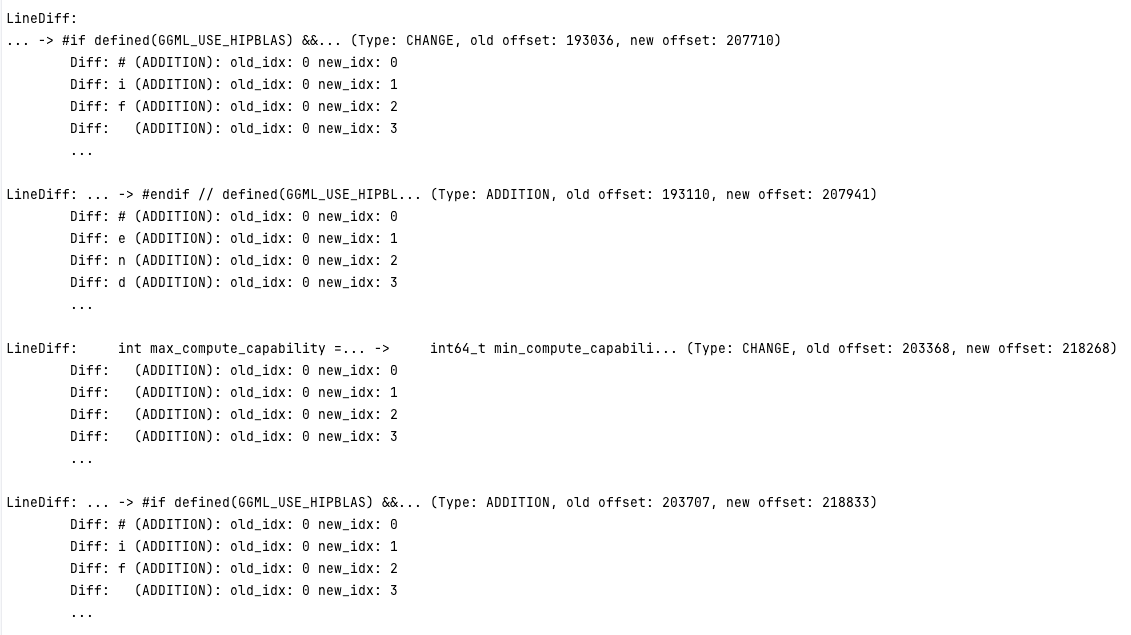}
  \caption{A visualization of the line and character diff format used. Character level diffs are strictly within the respective line boundaries to improve performance of the diffing algorithm \& also improve emitted actions "cosmetically".}
  \label{fig:line-and-char-diffs}
\end{figure}

The git history is iterated over in sequence using libgit2. Old and new file states are compared, and the equivalent editor actions are derived
according to the insert and cursor semantics of the text editor such that executing this sequence of actions will produce the new file state.

\begin{figure}[H]
  \centering
  \includegraphics[width=0.8\textwidth]{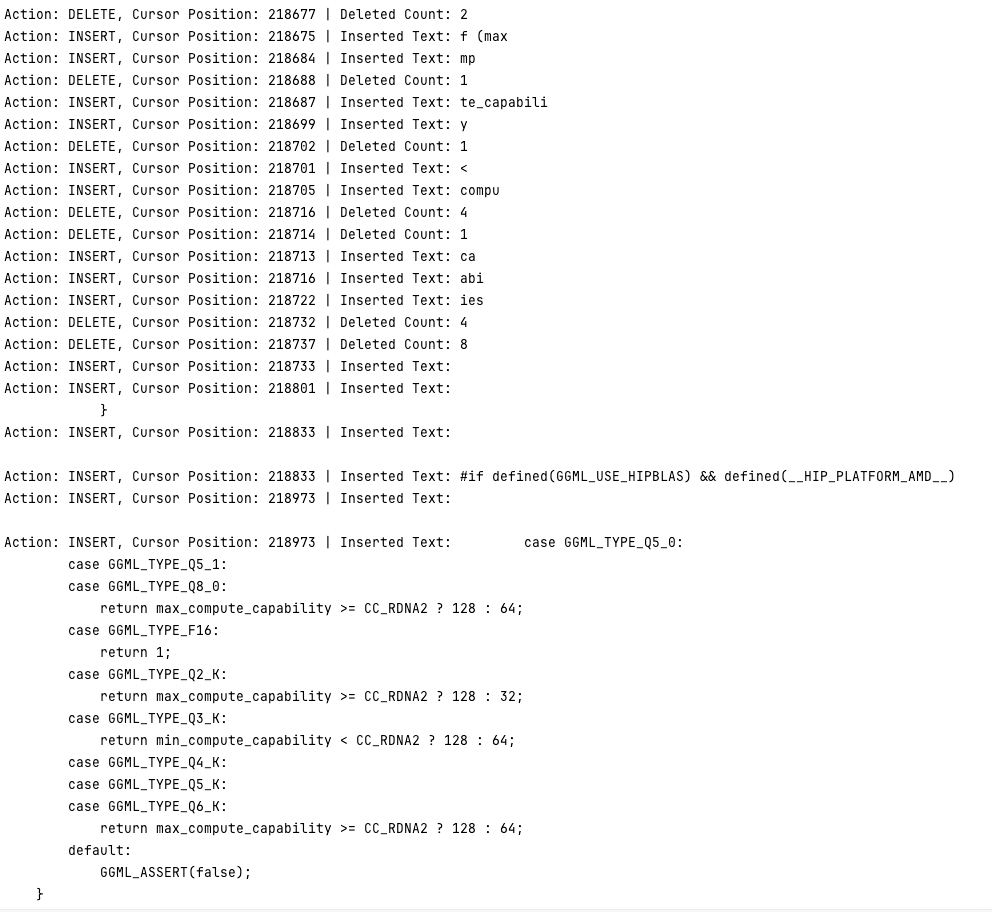}
  \caption{A sample of the editor actions. These are the final primitives executed by the text editor driver.}
  \label{fig:editor-actions}
\end{figure}

After applying the actions by driving the terminal emulator, we assert for safety reasons that the file state in the in-memory virtual file system matches the expected git end state -
a condition which even despite iterating over a sizable high-quality subset of GitHub repositories has never fired - even for long running repositories such as GCC, FFmpeg and LLVM.

\begin{figure}[H]
  \centering
  \includegraphics[width=0.8\textwidth]{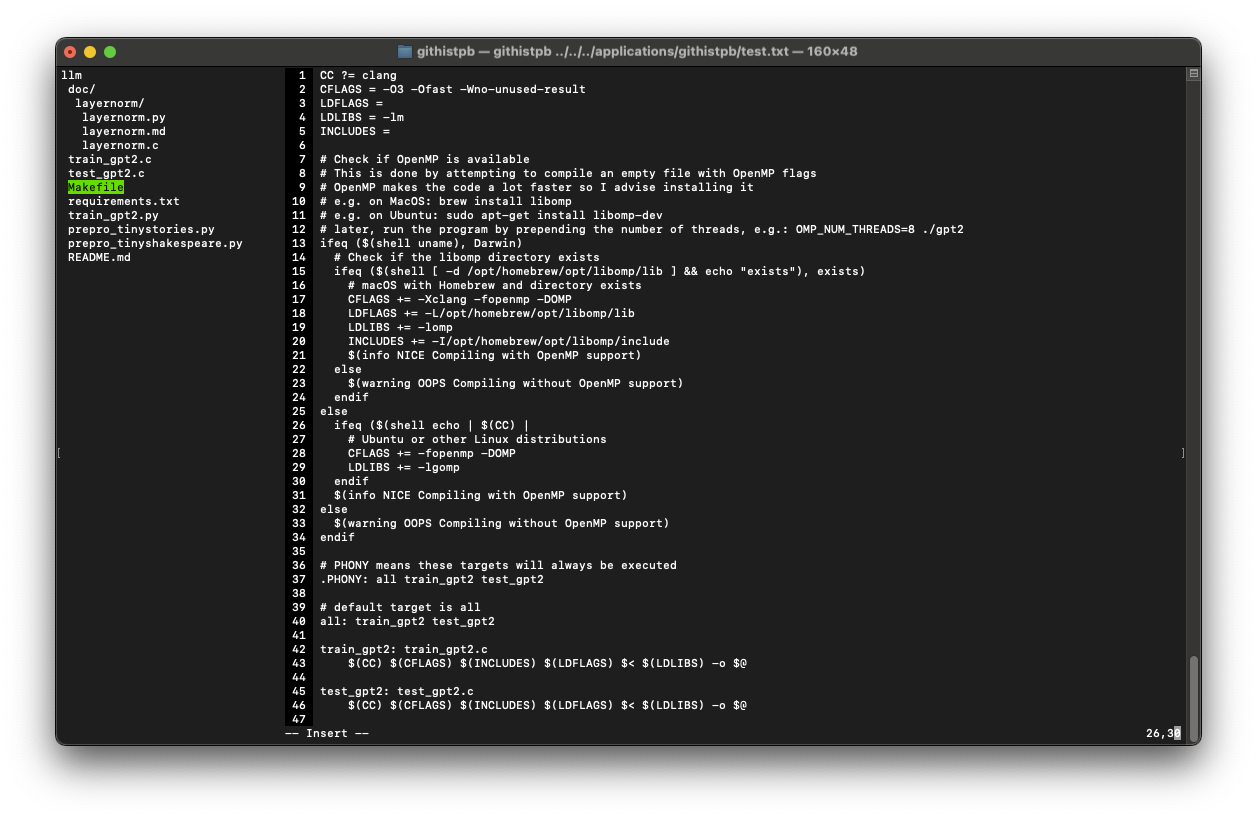}
  \caption{A sample of the terminal frame. The tiny-text-editor virtual process is opened inside a project folder. The left pane is a file browser, the right pane is a text editor with displayed line numbers.
  The "Makefile" is highlighted in the tree-view as the currently open file. The text editor is in "Insert" mode with the cursor at the end of line 26.
  }
  \label{fig:terminal-frame}
\end{figure}

After driving the terminal emulator, we capture the terminal frame buffer and encode it with our custom \texttt{termstreamxz} format.

The \texttt{termstreamxz} format is a video-esque format storing a sequence of frames, each of which is a sequence of characters along with color and styling options.
The format employs run-length encoding, sliding-window references and palette compression, leveraging tight bit-packing to achieve high compression ratios.
For example, an ``equivalence run'' is encoded as follows:
\bytefieldsetup{
  bitwidth=1.5em,
  bitheight=3.5em,
  boxformatting=\centering\scriptsize
}

\begin{figure}[H]
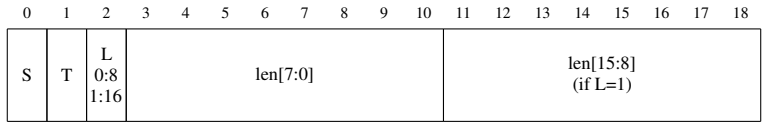

  \centering
  \begin{bytefield}{19}
    \bitheader{0-18}\\
    \bitbox{1}{S}
    \bitbox{1}{T}
    \bitbox{1}{L\\0:8\\1:16}
    \bitbox{8}{len[7:0]}
    \bitbox{8}{len[15:8]\\(if L=1)}
  \end{bytefield}
  \caption{Run encoding header structure}
\end{figure}

One bit is used to indicate that the following data is a ``special run'', which is either an equivalence run or a repeat run.
This bit is used to distinguish it from normal cell data, which runs may reference.
The next bit is used to indicate the type of special run, which is either an equivalence run (T=0) or a repeat run (T=1).
The next bit is used to indicate whether the run length is 8 or 16 bits long.
The next 8/16 bits are used to encode the run length.

With the above encoding we achieve on average a $300\times$ compression ratio compared to naive binary encoding of cell-states with 32 bits for codepoints and 8 bits for style channels.

Actions are stored separately where one action is either a single character or a null-terminated group of characters that form an xterm control sequence.
The actions are later tokenized with a greedy custom tokenizer of vocabulary size $20000$ that has been trained in case-aware fashion, taking advantage of the common means of communicating word boundaries in code, such as PascalCase, camelCase, snake\_case, etc.
Popularity of such a subword determines whether it should be considered a token, or constructed from smaller subwords.
The granularity of tokens determines the number of frames ``skipped'' between actions, given that a single action may imply more than one control-sequence to be emitted.

\begin{figure}[H]
  \centering
  \includegraphics[width=0.8\textwidth]{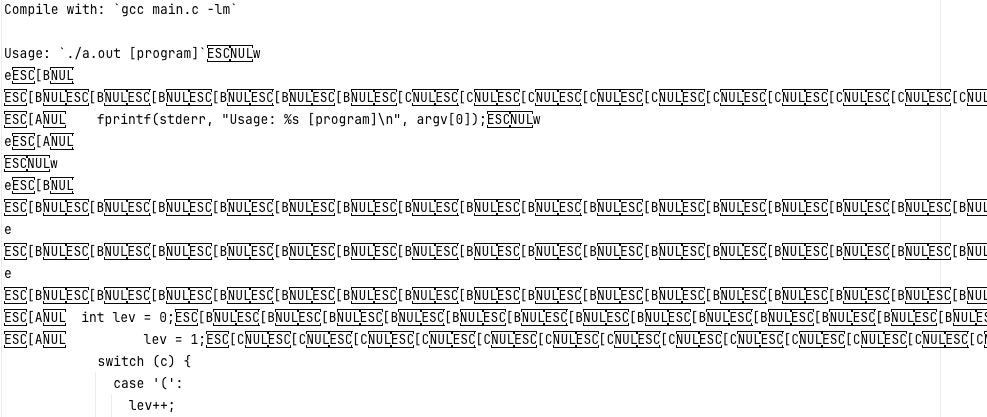}
  \caption{A sample of the action encoding format.}
  \label{fig:action-encoding}
\end{figure}

Logically, in inflated form, this data represents a sequence of ``text-video'' with next actions associated with each frame.

\begin{figure}[H]
  \centering
  \begin{tikzpicture}[x=1cm,y=1cm,font=\small]
    \def\W{6.0}
    \def\H{3.2}
    \def\dx{0.55}
    \def\dy{0.38}
    \def\Head{0.30}
    \def\gx{0.30}
    \def\gy{0.22}
    \def\Pad{0.03}
    \foreach \i/\lab/\dop in {
      -2/$x_{t-2}$/0.25,
      -1/$x_{t-1}$/0.45,
       0/$x_t$/1.00,
       1/$x_{t+1}$/0.75,
       2/$x_{t+2}$/0.55
    }{
      \begin{scope}[shift={(\i*\dx,\i*\dy)}]
        \path[fill=white, draw=black, draw opacity=\dop, line width=0.5pt, rounded corners=2pt]
             (0,0) rectangle (\W,\H);
        \path[fill=black!20, draw=black, draw opacity=\dop, rounded corners=2pt]
             (0,\H) rectangle (\W,\H-\Head);
        \begin{scope}
          \clip (0,0) rectangle (\W,\H-\Head-\Pad);
          \draw[xstep=\gx, ystep=\gy, very thin, draw=black!50, draw opacity=\dop]
               (0,0) grid (\W,\H-\Head-\Pad);
        \end{scope}
        \node[anchor=west, inner sep=2pt, text opacity=\dop] at (0.10,\H-\Head/2) {\lab};
      \end{scope}
    }
    \draw[->, thick] (-2.6*\dx,-2.6*\dy) -- (3.0*\dx,3.0*\dy) node[above right] {time};
  \end{tikzpicture}
  \caption{Stack of grid frames $x_t$ over time.}
  \label{fig:frame-stack}
\end{figure}

These frames are then fed into our model, which consists of a frame-head and a main sequence model consisting of a residual stack of LSTM cells interspersed with MLPs to predict the next action at the given time step.

\begin{figure}[H]
  \centering
  \resizebox{\linewidth}{!}{
  \begin{tikzpicture}[
      font=\small,
      >={Latex[length=2mm]},
      every label/.style={font=\scriptsize, inner sep=1pt},
      frame/.style={draw, rounded corners=2pt, inner sep=2pt,
                    minimum width=16mm, minimum height=12mm, fill=gray!5},
      gridline/.style={draw=black!25, line width=0.3pt},
      wrap/.style={draw, rounded corners=4pt, inner sep=4pt,
                   minimum height=12mm, minimum width=92mm, fill=orange!8},
      unit/.style={draw, rounded corners=2pt, align=center, inner sep=1.5pt,
                   minimum height=6mm, minimum width=13mm, fill=orange!15},
      mlpunit/.style={draw, rounded corners=2pt, align=center, inner sep=1.5pt,
                      minimum height=6mm, minimum width=13mm, fill=blue!15},
      act/.style={draw, circle, inner sep=0pt, minimum size=8mm, fill=green!12, font=\scriptsize},
      arr/.style={->, line width=1pt},
      bus/.style={line width=1pt, draw=black},
      link/.style={line width=1pt},
      conn/.style={draw=gray, densely dashed, line width=1pt},
      junction/.style={
        draw, circle, fill=white,
        inner sep=0pt, minimum size=2.6mm,
        line width=0.5pt, font=\scriptsize
      }
    ]
    \pgfdeclarelayer{behindcells}
    \pgfsetlayers{background,behindcells,main}
    \def\HeadH{12mm}
    \def\HeadHalf{6mm}
    \def\NeckH{6mm}
    \def\NeckHalf{3mm}
    \def\TrapL{6mm}
    \def\RectL{10mm}
    \def\TriL{4mm}
    \def\HeadGap{1mm}
    \tikzset{headshape/.style={draw, fill=blue!8, line join=round}}
    \def\rowgap{1.8}
    \def\wrapgap{14mm}
    \def\leftgap{2mm}
    \def\rightgap{6mm}
    \def\vlineoffA{3mm}
    \def\vlineoffB{3mm}
    \def\pairgapA{4mm}
    \def\pairgapB{4mm}
    \foreach \i/\name in {0/{x_{t-1}},1/{x_t},2/{x_{t+1}}}{
      \begin{scope}[shift={(0,-\i*\rowgap)}]
        \node[frame, label=left:{\(\name\)}] (frame\i) {};
        \path (frame\i.south west) coordinate (sw\i);
        \draw[gridline] (sw\i) ++(0,4mm) -- ++(16mm,0);
        \draw[gridline] (sw\i) ++(0,8mm) -- ++(16mm,0);
        \draw[gridline] (sw\i) ++(5.333mm,0) -- ++(0,12mm);
        \draw[gridline] (sw\i) ++(10.666mm,0) -- ++(0,12mm);
      \end{scope}
    }
    \foreach \i in {0,1,2}{
      \coordinate (headIn\i) at ($(frame\i.east)+(22mm,0)$);
      \path[headshape]
        ($(headIn\i)+(0,\HeadHalf)$) -- ($(headIn\i)+(0,-\HeadHalf)$) --
        ($(headIn\i)+(\TrapL,-\NeckHalf)$) --
        ($(headIn\i)+(\TrapL,\NeckHalf)$) -- cycle;
      \coordinate (rectIn\i) at ($(headIn\i)+(\TrapL+\HeadGap,0)$);
      \path[headshape]
        ($(rectIn\i)+(0,\NeckHalf)$) rectangle
        ($(rectIn\i)+(\RectL,-\NeckHalf)$);
      \coordinate (triIn\i) at ($(rectIn\i)+(\RectL+\HeadGap,0)$);
      \path[headshape]
        ($(triIn\i)+(0,\NeckHalf)$) --
        ($(triIn\i)+(0,-\NeckHalf)$) --
        ($(triIn\i)+(\TriL,0)$) -- cycle;
      \coordinate (headOut\i) at ($(triIn\i)+(\TriL,0)$);
      \draw[decorate,decoration={brace,amplitude=5pt,mirror}]
        ($(headIn\i)+(0,-\HeadHalf-2pt)$) -- ($(headOut\i)+(0,-\HeadHalf-2pt)$)
        node[midway,yshift=-8pt,font=\scriptsize]{Frame head};
      \draw[arr] (frame\i.east) -- (headIn\i);
    }
    \foreach \i in {0,1,2}{
      \begin{pgfonlayer}{background}
        \node[wrap, anchor=west] (lstm\i) at ($(headOut\i)+(\wrapgap,0)$) {};
      \end{pgfonlayer}
      \draw[bus] (headOut\i) -- (lstm\i.west);
      \coordinate (cellApos\i) at ($(lstm\i.north west)+(\leftgap,0)+(2mm,-2mm)$);
      \node[unit, anchor=north west] (cellA\i) at (cellApos\i) {LSTM Cell};
      \node[mlpunit, anchor=north west] (mlpA\i) at ($(cellA\i.north east)+(\pairgapA,0)$) {MLP};
      \node[unit, anchor=north west] (cellB\i) at ($(mlpA\i.north east)+(\pairgapB,0)$) {LSTM Cell};
      \node[mlpunit, anchor=north west] (mlpB\i) at ($(cellB\i.north east)+(\pairgapA,0)$) {MLP};
      \draw[link] ($(cellA\i.east)!0.5!(mlpA\i.west)$) -- (mlpA\i.west);
      \draw[link] ($(mlpA\i.east)!0.5!(cellB\i.west)$) -- (cellB\i.west);
      \draw[link] ($(cellB\i.east)!0.5!(mlpB\i.west)$) -- (mlpB\i.west);
      \coordinate (rgapL\i) at ($(lstm\i.east)+(-\rightgap,0)$);
      \node[anchor=east] (dots\i) at (rgapL\i) {$\cdots$};
      \coordinate (baseY\i) at ($(lstm\i.south west)+(0,2mm)$);
      \coordinate (busA\i)  at ($(cellA\i.south |- baseY\i)$);
      \coordinate (feedB\i) at ($(cellB\i.south |- baseY\i)$);
      \coordinate (gapC\i)  at ($(rgapL\i)!0.5!(lstm\i.east)$);
      \coordinate (joinB\i) at ($(gapC\i |- baseY\i)$);
      \coordinate (joinM\i) at ($(gapC\i |- lstm\i)$);
      \coordinate (outL\i)  at ($(lstm\i.east)+(0.8mm,0)$);
      \coordinate (entry\i) at (lstm\i.west);
      \coordinate (inner\i) at ($(entry\i)+(\leftgap,0)$);
      \coordinate (turn\i)  at ($(inner\i |- cellA\i.west)$);
      \draw[bus] (busA\i) -| (turn\i);
      \draw[bus] (busA\i) -- (joinB\i);
      \draw[bus, shorten >=0.5\pgflinewidth] (joinB\i) -- (joinM\i) -- (lstm\i.east);
      \coordinate (busMA\i) at ($(mlpA\i.south |- baseY\i)$);
      \coordinate (busMB\i) at ($(mlpB\i.south |- baseY\i)$);
      \draw (cellA\i.south) -- (busA\i);
      \draw (cellB\i.south) -- (feedB\i) -- (joinB\i);
      \draw (mlpA\i.south) -- (busMA\i);
      \draw (mlpB\i.south) -- (busMB\i);
      \node[junction] at (busA\i) {$+$};
      \node[junction] at (feedB\i) {$+$};
      \node[junction] at (busMA\i) {$+$};
      \node[junction] at (busMB\i) {$+$};
      \coordinate (midA\i)      at ($(cellA\i.east)!0.5!(mlpA\i.west)$);
      \coordinate (midA_bus\i)  at ($(midA\i |- baseY\i)$);
      \coordinate (midAB\i)     at ($(mlpA\i.east)!0.5!(cellB\i.west)$);
      \coordinate (midAB_bus\i) at ($(midAB\i |- baseY\i)$);
      \coordinate (midB\i)      at ($(cellB\i.east)!0.5!(mlpB\i.west)$);
      \coordinate (midB_bus\i)  at ($(midB\i |- baseY\i)$);
      \draw[bus,shorten <=-0.5\pgflinewidth] (midA\i)  -- (midA_bus\i);
      \draw[bus,shorten <=-0.5\pgflinewidth] (midAB\i) -- (midAB_bus\i);
      \draw[bus,shorten <=-0.5\pgflinewidth] (midB\i)  -- (midB_bus\i);
      \draw[link] (entry\i) -- (inner\i) -- (turn\i) -- (cellA\i.west);
    }
    \foreach \i/\lab in {0/$a_{t-1}$,1/$a_t$,2/$a_{t+1}$}{
      \node[act] (act\i) at ($(lstm\i.east)+(12mm,0)$) {\lab};
      \draw[arr] (lstm\i.east) -- (outL\i) -- (act\i.west);
    }
    \begin{pgfonlayer}{behindcells}
      \foreach \i in {0,1,2}{
        \coordinate (colA\i) at ($(cellA\i.south)+(\vlineoffA,0)$);
        \coordinate (colB\i) at ($(cellB\i.south)+(\vlineoffB,0)$);
      }
      \draw[conn] ($(colA0|-cellA0.center)$) -- ($(colA1|-cellA1.center)$);
      \draw[conn] ($(colA1|-cellA1.center)$) -- ($(colA2|-cellA2.center)$);
      \node[font=\scriptsize, align=center, inner sep=2pt, fill=white, text=black!60]
        at ($(colA1|-cellA1.center)+(0,8.5mm)$) {h + CEC};
      \draw[conn] ($(colB0|-cellB0.center)$) -- ($(colB1|-cellB1.center)$);
      \draw[conn] ($(colB1|-cellB1.center)$) -- ($(colB2|-cellB2.center)$);
      \node[font=\scriptsize, align=center, inner sep=2pt, fill=white, text=black!60]
        at ($(colB1|-cellB1.center)+(0,8.5mm)$) {h + CEC};
    \end{pgfonlayer}
    \node[black!60] at ($(lstm0.north)+(0,6mm)$) {$\vdots$};
    \node[black!60] at ($(lstm2.south)+(0,-6mm)$) {$\vdots$};
  \end{tikzpicture}
  }
  \caption{Frame-based Action Model with Frame-Head and Main Sequence model.}
  \label{fig:frame-based-action-model}
\end{figure}

Additionally, our data contains ``dummy vcs'' actions which mirror the real commit messages for the code in question, thus providing implicit conditioning for subsequent actions to be described by the commit message.
We find that commit messages are often extremely high-quality descriptions of the contribution for repositories such as LLVM.

\begin{figure}[H]
  \centering
  \includegraphics[width=0.8\textwidth]{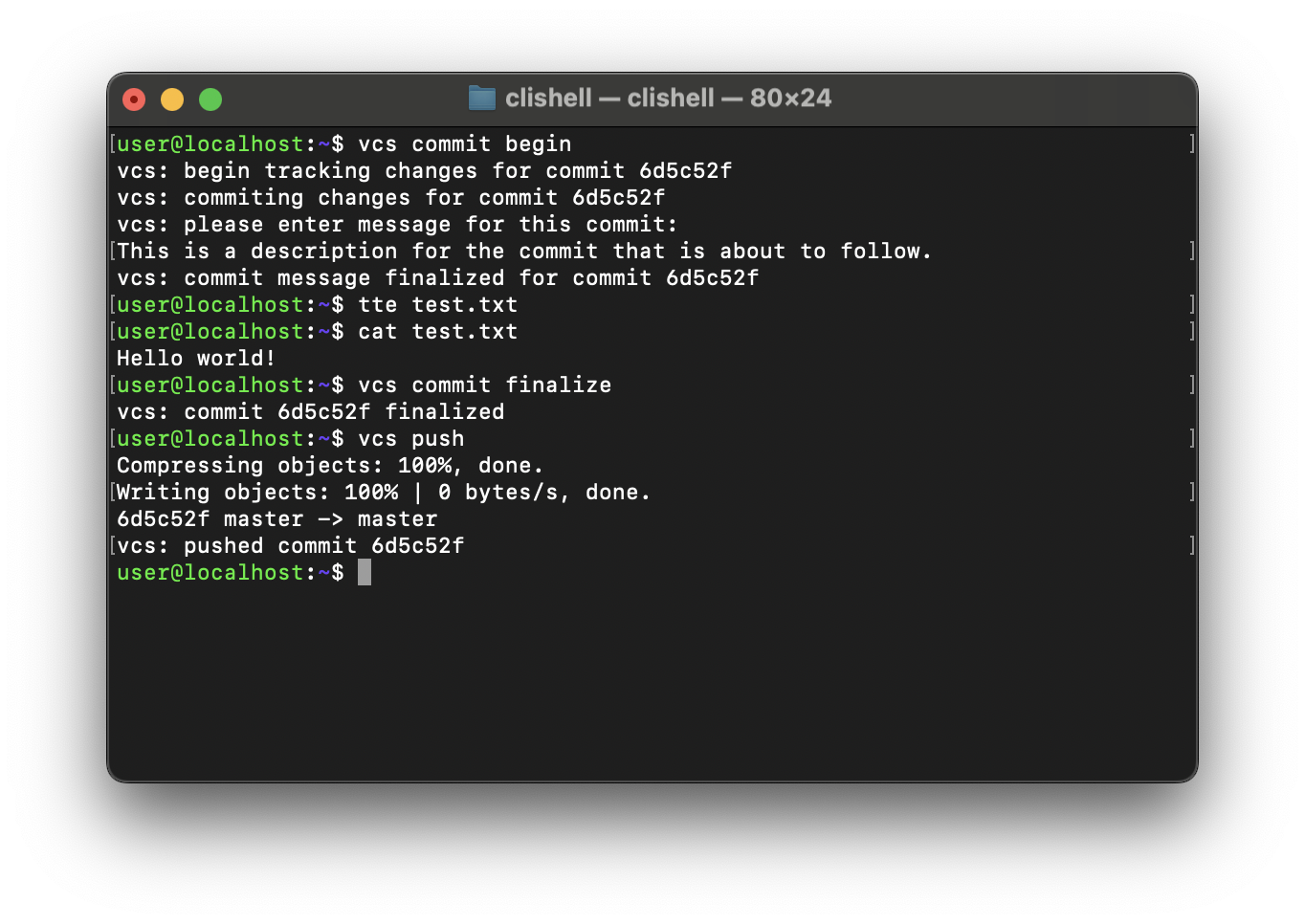}
  \caption{Example VCS interaction}
  \label{fig:example-vcs-interaction}
\end{figure}

\begin{figure}[H]
  \centering
  \includegraphics[width=0.8\textwidth]{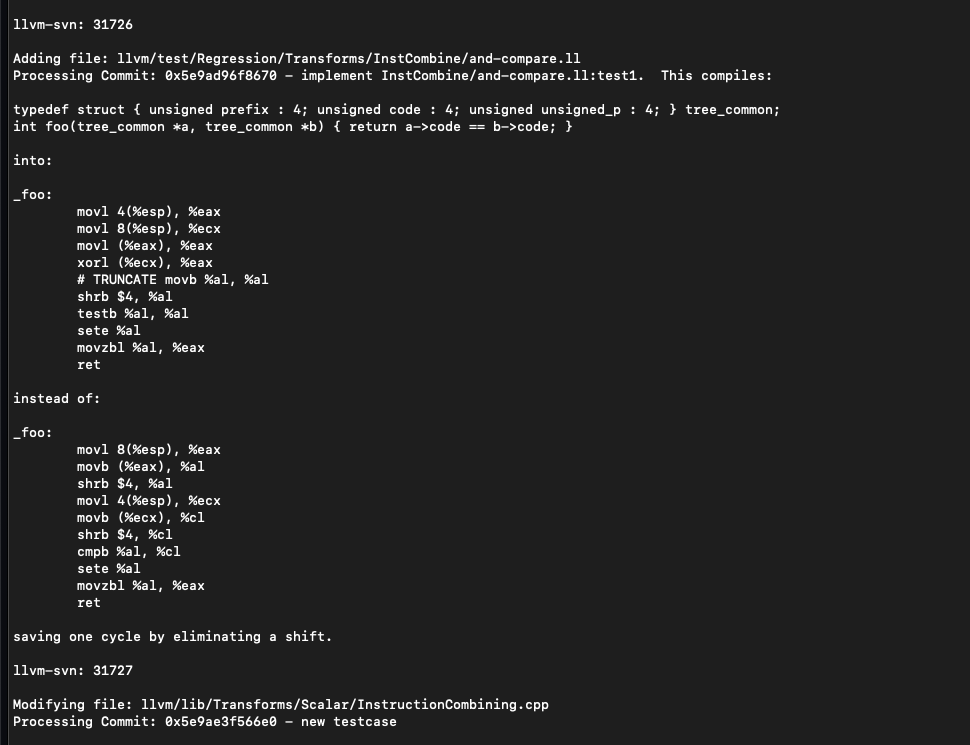}
  \caption{Example commit message}
  \label{fig:example-commit-msg}
\end{figure}

\subsection{Frame-Head}
The frame head is a transformer-based model that is tasked with embedding the frame into a latent space.
It does so by applying full self-attention to the input frame sequence interspersed with pooling operations to reduce sequence length, accomplishing the same goal as tokenization.
We note that while actions are tokenized, input cell-state to the frame-head remains character level.
Finally, the output is fed into a terminating LSTM cell to produce the frame embedding.

\begin{figure}[H]
  \centering
  \begin{tikzpicture}[
    font=\small,
    node distance=10mm,
    >={Latex},
    tensor/.style={draw,thick,rounded corners=2pt,minimum height=9mm,minimum width=42mm,align=center,fill=blue!6},
    op/.style={draw,thick,rounded corners=2pt,minimum height=10mm,minimum width=42mm,align=center,fill=orange!12},
    smallop/.style={draw,thick,rounded corners=2pt,minimum height=9mm,minimum width=42mm,align=center,fill=yellow!18},
    mlp/.style={draw,thick,rounded corners=2pt,minimum height=9mm,minimum width=42mm,align=center,fill=blue!6},
    blockwrap/.style={draw,rounded corners=3pt,thick,inner xsep=7pt,inner ysep=10pt},
    arrow/.style={-Latex,thick},
    brace/.style={decorate,decoration={brace,amplitude=4pt}},
  ]
  \def\colA{7mm}
  \def\colB{12mm}
  \def\labelsep{2.5mm}
  \def\colL{4mm}
  \node[tensor] (inp) {Input tokens\\$\mathbf{X}\in\mathbb{R}^{B\times T\times C}$\\(with $C=n_{\text{embed}}$)};
  \node[op, below=of inp] (attn) {Full Self-Attention\\(multi-head)\\$(B,T,C)\to(B,T,C)$};
  \node[mlp, below=of attn] (mlp0) {MLP $(C\rightarrow 4C \rightarrow C)$};
  \node[smallop, below=of mlp0] (avg1) {AvgPool1D over $T$\\(stride $s_1$)\\$(B,T,C)\to(B,T',C)$};
  \node[align=center, below=of avg1] (dots) {$\hdots$};
  \node[op, below=of dots] (attn2) {Full Self-Attention\\(multi-head)\\$(B,T',C)\to(B,T',C)$};
  \node[mlp, below=of attn2] (mlp2) {MLP $(C\rightarrow 4C \rightarrow C)$};
  \node[align=center, below=of mlp2] (dots2) {$\hdots$};
  \node[op, below=of dots2] (lstm) {Terminating LSTM\\(hidden size $=n_{\text{embed}}$)\\use last $h$\\$(B,T',C)\to(B,C)$};
  \node[tensor, below=of lstm] (out) {Aggregated token\\$\mathbf{y}\in\mathbb{R}^{B\times C}$\\($C=n_{\text{embed}}$)};
  \draw[arrow] (inp) -- (attn);
  \draw[arrow] (attn) -- (mlp0);
  \draw[arrow] (mlp0) -- (avg1);
  \draw[arrow] (avg1) -- (dots) -- (attn2);
  \draw[arrow] (attn2) -- (mlp2) -- (dots2) -- (lstm);
  \draw[arrow] (lstm) -- (out);
  \begin{pgfonlayer}{background}
    \node[blockwrap, fit=(attn)(mlp0)] (blk0) {};
    \node[blockwrap, fit=(attn2)(mlp2)] (blk1) {};
  \end{pgfonlayer}
  \coordinate (nxX)  at ($ (blk0.west) + (-\colL,0) $);
  \coordinate (nxTop) at (nxX |- blk0.north);
  \coordinate (nxBot) at (nxX |- avg1.south);
  \draw[decorate,decoration={brace,mirror,amplitude=4pt}]
    (nxTop) -- (nxBot)
    node[midway, anchor=east, xshift=-\labelsep] (labNx) {$N\times$};
  \coordinate (nx2X)  at ($ (blk1.west) + (-\colL,0) $);
  \coordinate (nx2Top) at (nx2X |- blk1.north);
  \coordinate (nx2Bot) at (nx2X |- blk1.south);
  \draw[decorate,decoration={brace,mirror,amplitude=4pt}]
    (nx2Top) -- (nx2Bot)
    node[midway, anchor=east, xshift=-\labelsep] (labNx2) {$N\times$};
  \draw[brace]
    ($(blk0.north east)+(\colA, 1.2mm)+(2.5mm, 0mm)$) -- ($(blk0.south east)+(\colA,-1.2mm)+(2.5mm, 0mm)$)
    node[midway, anchor=west, xshift=\labelsep] (labBlk0) {$T$ time steps};
  \draw[brace]
    ($(avg1.north east)+(\colB, 1.2mm)$) -- ($(avg1.south east)+(\colB,-1.2mm)$)
    node[midway, anchor=west, xshift=\labelsep] (labAvg1) {$T\to T'$ (downsample)};
  \draw[brace]
    ($(blk1.north east)+(\colA, 1.2mm)$) -- ($(blk1.south east)+(\colA,-1.2mm)$)
    node[midway, anchor=west, xshift=\labelsep] (labBlk1) {$T'$ time steps};
  \draw[brace]
    ($(lstm.north east)+(\colA, 1.2mm)$) -- ($(lstm.south east)+(\colA,-1.2mm)$)
    node[midway, anchor=west, xshift=\labelsep] (labLstm) {$T'\to 1$ (take last state)};
  \begin{pgfonlayer}{background}
    \node[draw,dashed,rounded corners=3pt,thick,
          inner xsep=14pt, inner ysep=16pt,
          fit=(attn)(mlp0)
              (avg1)(dots)
              (attn2)(mlp2)(dots2)
              (lstm)
              (labNx)(labNx2)
              (labBlk0)(labAvg1)(labBlk1)(labLstm)] (framebox) {};
  \end{pgfonlayer}
  \end{tikzpicture}
  \caption{The frame head architecture.}
  \label{fig:frame-head}
\end{figure}
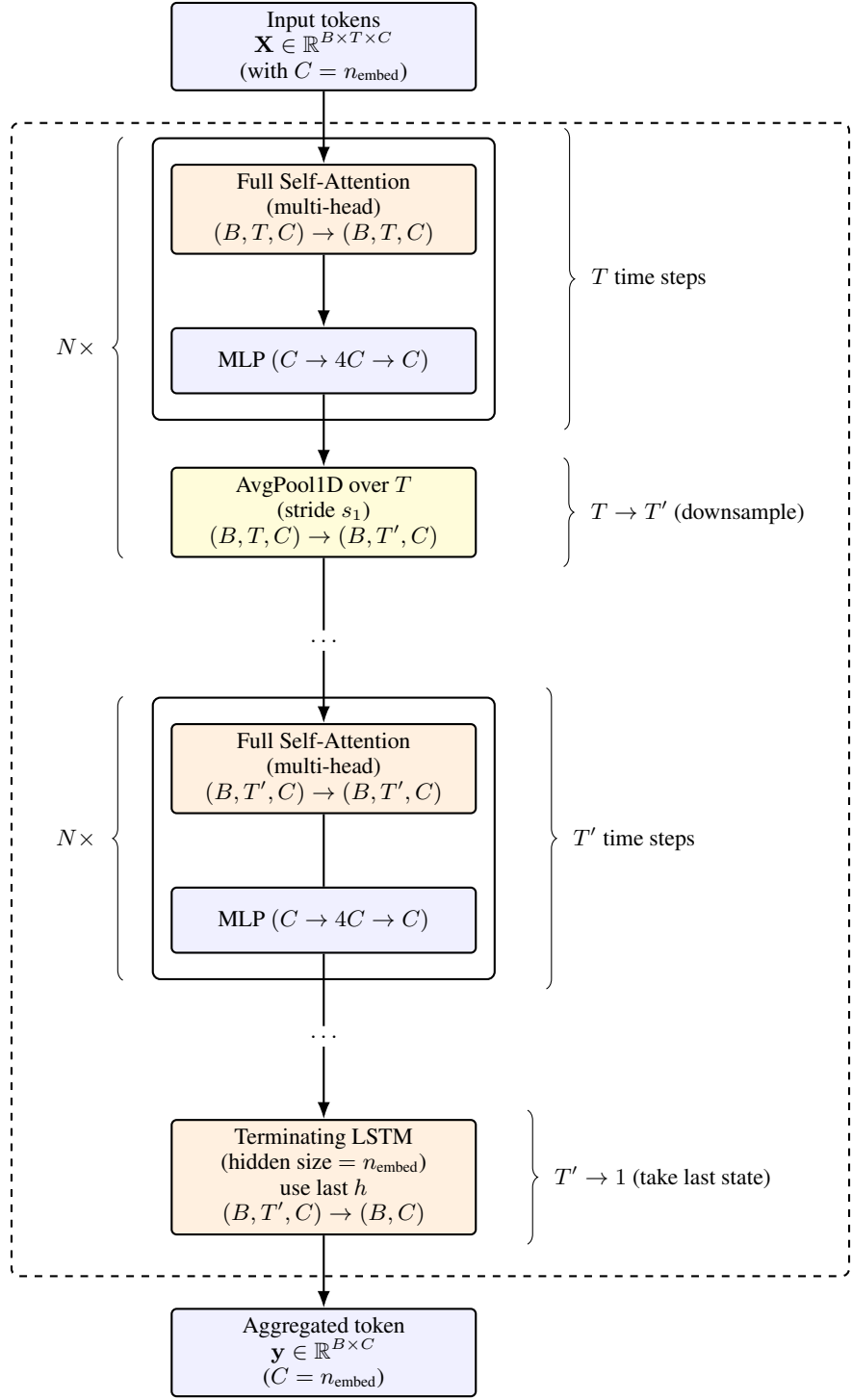

\subsection{Streaming Backpropagation and Recomputation}
To backpropagate through time to what amounts to potentially thousands of frame head forward passes, we employ full recomputation of frame head activations for the backward pass.
Never more than one frame head is backpropagated through concurrently to minimize memory usage.
Additionally, the LSTMs of the main sequence model page activations saved for backpropagation to host memory to be streamed back to the GPU as the backward pass progresses.
This can be achieved in chunked pre-fetching fashion such that transfer occurs concurrently with the last frame-head's gradient contribution computation.
This side-steps the vast amount of GPU memory that would otherwise be required to backpropagate through this model.
This keeps GPU memory usage roughly at $O(1)$ as a function of sequence length at the cost of constant factor increase in wall-time.

\subsection{Experiments}
\label{sec:fbam-experiments}
In our experiments, we employ the Muon optimizer \citep{jordan2024muon} with a fixed learning rate of $3\times 10^{-3}$ for all matrix-parameters while using AdamW \citep{loshchilov2019decoupled} for all other parameters with a fixed learning rate of $3\times 10^{-4}$, betas of $(0.9, 0.95)$ and weight decay of $0.01$
along with a total batch size of $512$.
We note that ``sequence length'' here refers to the number of frames used for a single training example.
Full backpropagation through time is employed across the entire sequence of frames in streaming fashion.
Each frame consists of $48 \times 160 = 7680$ input cells of ``per-frame'' sequence length.

\subsubsection{GitHub Compilers and Interpreters Dataset}
\label{sec:compilers-and-interpreters-experiments}
The following set of experiments was conducted on the ``compilers and interpreters'' dataset, a subset of GitHub filtered for projects that implement toy compilers and interpreters.
Sustained accuracy is measured as the average number of correct frames in a row without interruption as per top-1 sampling on a held-out validation set.
Loss is measured as training loss at 1000 steps.

\begin{table}[H]
  \centering
  \footnotesize
  \setlength{\tabcolsep}{3pt}
  \begin{tabular}{ccccccccccc}
    \toprule
    $n_{params}$ & $n_{hidden}$ & $n_{layer}$ & $n_{pool}$ & $d_{model}$ & $n_{seq}$ & $n_{frames}$ & LR & BS & loss$_{train}$ & acc$_{sust}$ \\
    \midrule
    103M & 768 & 6 & 2 & 768 & 2 & 2 & $3\times 10^{-3}$ & 512 & 2.33 & 1.06 \\
    103M & 768 & 6 & 2 & 768 & 2 & 16 & $3\times 10^{-3}$ & 512 & 1.11 & 3.02 \\
    103M & 768 & 6 & 2 & 768 & 2 & 128 & $3\times 10^{-3}$ & 512 & 0.25 & 141 \\
    145M & 768 & 12 & 2 & 768 & 2 & 128 & $3\times 10^{-3}$ & 512 & 0.23 & 150 \\
    82M & 768 & 3 & 2 & 768 & 2 & 256 & $3\times 10^{-3}$ & 512 & 0.17 & 212 \\
    \bottomrule
  \end{tabular}
\end{table}

\subsubsection{GitHub Technical Excellence Dataset}
\label{sec:github-technical-excellence-experiments}

The following set of experiments was conducted on a dataset generated from a high-quality subset of GitHub repositories filtered for ``technical excellence'', resulting in 1.6TB of highly compressed training data.
Loss is measured as training loss at 4000 steps.

\begin{table}[H]
  \centering
  \footnotesize
  \setlength{\tabcolsep}{3pt}
  \begin{tabular}{ccccccccccc}
    \toprule
    $n_{params}$ & $n_{hidden}$ & $n_{layer}$ & $n_{pool}$ & $d_{model}$ & $n_{seq}$ & $n_{frames}$ & LR & BS & loss$_{train}$ & acc$_{sust}$ \\
    \midrule
    82M & 768 & 3 & 2 & 768 & 2 & 2 & $3\times 10^{-3}$ & 512 & 4.69 & 0.35 \\
    82M & 768 & 3 & 2 & 768 & 2 & 2 & $3\times 10^{-3}$ & 1024 & 4.22 & 0.28 \\

    82M & 768 & 3 & 2 & 768 & 2 & 4 & $3\times 10^{-3}$ & 512 & 3.01 & 0.45 \\
    82M & 768 & 3 & 2 & 768 & 2 & 4 & $3\times 10^{-3}$ & 1024 & 2.98 & 0.63 \\

    82M & 768 & 3 & 2 & 768 & 2 & 16 & $3\times 10^{-3}$ & 512 & 1.88 & 2.03 \\
    82M & 768 & 3 & 2 & 768 & 2 & 16 & $3\times 10^{-3}$ & 1024 & 1.61 & 1.99 \\

    82M & 768 & 3 & 2 & 768 & 2 & 128 & $3\times 10^{-3}$ & 512 & 1.01 & 3.71 \\
    82M & 768 & 3 & 2 & 768 & 2 & 128 & $3\times 10^{-3}$ & 1024 & 0.82 & 4.32 \\
    737M & 4096 & 3 & 2 & 1024 & 2 & 128 & $3\times 10^{-3}$ & 512 & 0.73 & 4.56 \\

    82M & 768 & 3 & 2 & 768 & 2 & 512 & $3\times 10^{-3}$ & 512 & 0.71 & 5.24 \\
    82M & 768 & 3 & 2 & 768 & 2 & 1024 & $3\times 10^{-3}$ & 512 & 0.61 & 6.20 \\
    \bottomrule
  \end{tabular}
\end{table}

\begin{figure}[H]
  \centering
  \includegraphics[width=0.8\textwidth]{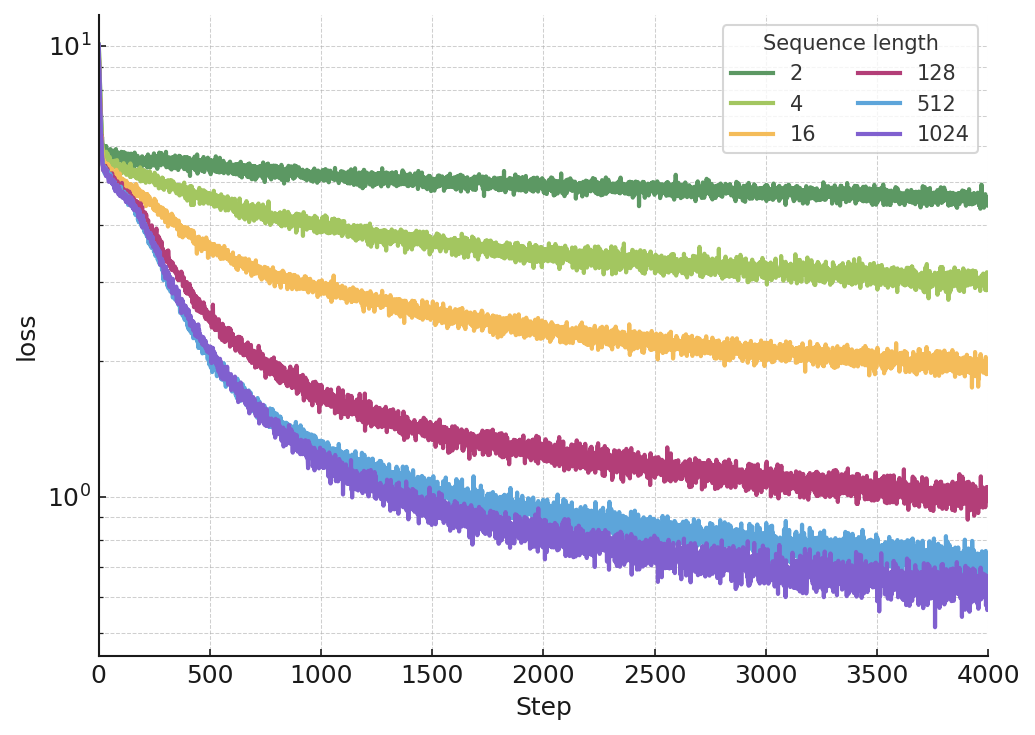}
  \caption{Loss for models with different sequence lengths as a function of step count.}
  \label{fig:technical-excellence-loss}
\end{figure}

\subsection{Scaling Trends}
Training these models, we observe a clear trend of faster convergence as the trained sequence length increases.
Scaling sequence length in frame count linearly increases the runtime per step; however, this extra cost is amortized as training progresses, and beyond a certain point the longer-sequence runs overtake shorter ones in loss as a function of wall time.
As we continue to explore these scaling laws our primary focus remains on pushing sequence length while increasing cell capacity when needed.

Specifically, training loss at a fixed step \(s\) follows a strict power law in the sequence length \(L\):
\begin{equation}
\mathrm{loss}(L\mid s) \approx A(s)\, L^{-\alpha(s)} .
\end{equation}
At \(s=400\) we obtain \(\alpha(400)=0.129\) and \(A(400)\approx 5.65\) with \(R^2=0.971\); by \(s=650\) the exponent increases to \(\alpha(650)=0.196\) with \(A(650)\approx 5.80\) (\(R^2=0.989\)).
A convenient rule of thumb is the per-doubling improvement \(2^{-\alpha(s)}\): at 400 steps a doubling of \(L\) reduces loss by \(\approx 8.6\%\), and by 650 steps by \(\approx 12.7\%\).
By \(s=4000\) we measure \(\alpha(4000)\approx 0.318\) and \(A(4000)\approx 4.96\) (\(R^2\approx 0.993\)), consistent with approaching a plateau.

\subsubsection{Evolution of the Scaling Exponent}
The exponent \(\alpha(s)\) grows during early training and then plateaus. A simple saturating exponential captures this dynamic:
\begin{equation}
\alpha(s) = \alpha_{\infty}\!\left(1 - e^{-s/\tau}\right),
\qquad
\alpha_{\infty} \approx 0.308,\ \tau \approx 720\ \text{steps}.
\end{equation}
Thus, the loss ratio between two lengths satisfies
\begin{equation}
\frac{\mathrm{loss}(L_2\mid s)}{\mathrm{loss}(L_1\mid s)} \approx \left(\frac{L_2}{L_1}\right)^{-\alpha(s)} ,
\end{equation}
so at the plateau \(\alpha_{\infty}\) a doubling of \(L\) yields a sustained \(\approx 19\text{--}20\%\) reduction in loss \(\bigl(2^{-\alpha_{\infty}}\approx 0.808\bigr)\).

\paragraph{Implications for wall-time.}
If wall-time per step scales linearly with sequence length, then equal-time comparisons follow
\begin{equation}
\mathrm{loss}(t, L) \;\approx\; A\!\left(\tfrac{\gamma t}{L}\right)\, L^{-\alpha\!\left(\tfrac{\gamma t}{L}\right)} ,
\end{equation}
where $\gamma$ converts wall-time to steps. Under mild conditions where $A(s)$ varies slowly or saturates, longer sequences amortize their extra per-step cost and eventually dominate on a loss-versus-time plot.

\paragraph{Assumptions and claim.}
We assume
\begin{equation}
\mathrm{loss}(L\mid s) \;=\; A(s)\,L^{-\alpha(s)}\,\bigl(1+r(L,s)\bigr),
\label{eq:basic-scaling}
\end{equation}
with $A(s)\to A_\infty>0$, $\alpha(s)\to \alpha_\infty>0$ as $s\to\infty$, and a uniform multiplicative error satisfying $\sup_{L\ge 1}|r(L,s)|\to 0$ as $s\to\infty$. Wall-time per step scales linearly so $s(L,t)=\gamma t/L$.

\textbf{Claim.} For any $L_2>L_1$ there exists $T$ such that for all $t\ge T$ one has $\mathrm{loss}(t,L_2)<\mathrm{loss}(t,L_1)$. The proof appears in Appendix \ref{sec:walltime-amortization-proof}.

This effect can be seen to manifest empirically in Figure \ref{fig:walltime-amortization}:
\begin{figure}[H]
  \centering
  \includegraphics[width=0.8\textwidth]{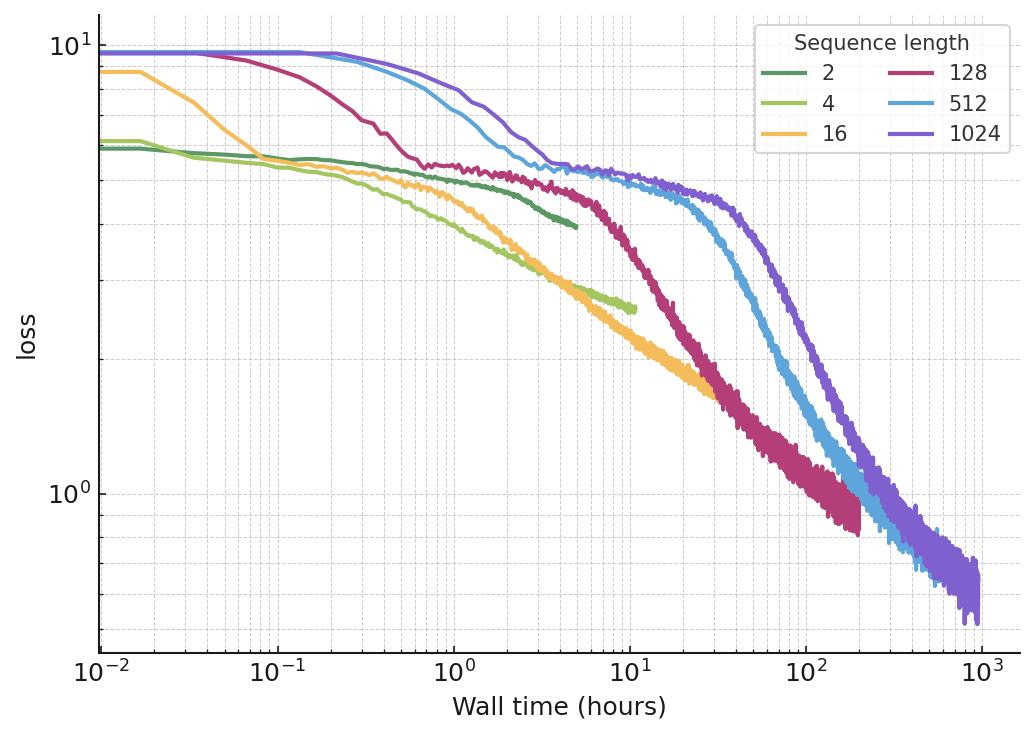}
  \caption{Wall-time amortization. Loss as a function of wall time "catches up" to the shorter sequence length runs. Trend is made more apparent by using log-scale.}
  \label{fig:walltime-amortization}
\end{figure}

This means that linearly increasing wall time per step is amortized through faster convergence, resulting in ultimately lower loss as a function of wall time
as the wall time approaches infinity.
In such a setting it never makes sense to deliberately train a shorter sequence length model for reasons other than practicality.

We phrase the scaling law as a function of wall time explicitly to highlight that while batch size must be chosen appropriately, it can be done so without increasing wall time.
Therefore the important relation to capture is training loss achievable at a given sequence length in relation to wall time with batch size being independent of it.

\subsection{Isn't this just more tokens?}
Because for the training-runs in question the batch size was kept constant, the amount of actions trained on per update increases.
We thus analyze the effect of decreasing batch size to keep the amount of actions per update constant as we scale sequence length, keeping $batch\_size \times sequence\_length$ constant.
We find that while the decrease in optimization signal does slow down convergence as a function of step count, lower sequence length runs are still consistently outperformed by their higher sequence length counterparts \footnote{See Appendix \ref{sec:fixating-number-of-actions-per-update} for details on fixing the number of actions per update}.
Additionally, increasing the batch size at any given sequence length only improves convergence speed up to a certain point.
Keeping the number of actions per update constant, higher sequence length runs consistently outperform higher batch size runs.
Therefore we suggest that there exists a critical batch size after which sequence length must be scaled to obtain lower loss\footnote{See Appendix \ref{sec:critical-batch-size} for details on the critical batch size}.

While higher sequence length runs perform more FLOPs than higher batch size runs at equal action counts, their frame head compute budget remains identical.
We note however that the number of flops performed by the LSTM-based main sequence model itself remains a rather small fraction of the overall performed FLOPs (approx. 6\% of the total forward FLOPs incl. MLPs)\footnote{See Appendix \ref{sec:model-flops} for details on the LSTM FLOPs.}.
Under these assumptions, keeping the number of actions per update constant also roughly preserves the FLOP budget.

\subsection{What about parameter scaling?}
Parameter scaling e.g. in the frame head is associated with a multiplicative increase in walltime, further inflating what is already months of training time for 4000 steps.
Therefore no exhaustive sweeps were conducted as of now for a concrete scaling law.
However, we point to an experiment conducted on the ``compilers and interpreters'' dataset where a 12-layer framehead resulted in only a marginal improvement over a 6-layer framehead, while subsequently being outperformed by a 3-layer framehead with higher sequence length scaling\footnote{See Experiments \ref{sec:compilers-and-interpreters-experiments}}.
We believe that this is due to the fact that the overall effective depth of this network is already so high that increasing per-frame, non-hidden state dependent layer depth comes with significant diminishing returns.
Frame-head width scaling in conjunction with cell-capacity scaling appears to be the most effective way to improve convergence as a function of parameter count, resulting in early descent which is usually characteristic of continued improvements, however these runs were aborted early due to excessive step walltime requirements.
A promising middle ground seems to be scaling main sequence model cell capacity without significantly scaling frame head width, which was explored in an experiment on the ``GitHub technical excellence'' dataset.
And yet this model was again outperformed by a higher sequence length run with less than a quarter of its cell capacity\footnote{See Experiment \ref{sec:github-technical-excellence-experiments}}.

\subsection{How does this compare to vanilla Transformers?}
Direct comparison of frame-based action models to vanilla sequence models is difficult due to the fact that concatenative autoregression ``perceives'' differently from the more generalized autoregression present in frame-based action models.
Training directly on the targets of the same dataset would leave the model ``tripping in the dark'' as to e.g. current cursor state, and thus also position within the file of where the prediction occurs.
Additionally, given the model would e.g. lack knowledge of file contents of reopened files, it is unreasonable to expect the prediction objective to succeed in any meaningful way beyond memorizing fashion.
The best comparison we can make is the immediate next prediction of the frame-head, which is transformer-based---although still not a traditional sequence-to-sequence transformer.
Without the main sequence model---or generally at low frame sequence lengths, performance is unsatisfactory compared to their higher sequence length counterparts. We refer to experiments with sequence length 2, where the effect of the LSTM sequence model is minuscule.

\subsection{What causes the power law?}
It should be noted that while causal attention stagnates earlier than when using an LSTM as the main sequence model\footnote{See Appendix \ref{sec:non-linear-serial-integration-vs-weighting-based-aggregation} for comparison of serial integration vs. weighting-based aggregation.},
increasing sequence length increases performance regardless of sequence model type.
Naively this would suggest that simply attending to more information assists in modeling performance.
However, adjacent frames are largely similar with the exception of inserted characters, deleted characters and in rare cases UI layout changes.
We thus conduct an experiment where the input text is fully observable within one frame, where characters are only inserted and not deleted. This objective is effectively equivalent to traditional language modeling\footnote{See Appendix \ref{sec:fully-observable-frame-experiment}. for details for information on fully observable frame construction.}.
In such a case if we assume the embedding formed by a given frame head to be only a latent representation of the frame and the frame alone, then no information is gained at all by attending to previous frames,
as the current frame already contains all information from previous frames excluding the very last token.
This suggests that the notion of a frame embedding is slightly misleading and actually contains additional information to satisfy the main sequence model's information acquisition needs.

While the average loss across the sequence may decrease as sequence length increases in traditional language modeling setups, this does not usually translate into lower loss for earlier token positions---in fact, as sequence length increases, loss at earlier token positions sometimes even increases.
This indicates that training on longer sequence lengths for LLMs simply increases the confidence of the model in later predictions as more information becomes available.

However, for frame-based action models, we observe that both early token loss and late token loss are within the vicinity of the mean cross entropy, validating the suitability of the metric for modeling performance.
This behavior is not observed in traditional language models\footnote{See Appendix \ref{sec:representativity-of-mean-cross-entropy} for details on early token loss.}.
Thus, higher sequence length simply enables models to reach overall levels of lower loss not just for later token positions, but for the entirety of the sequence.

We hypothesize that the credit assignment results in optimization pressure being exerted on the frame embeddings to already contain information relevant to future predictions, which may have synergistic effects for the immediate next prediction.
Additionally, recurrent neural networks contain nonlinear activation functions between every time step, making them similar to deep feedforward networks.
The relation between LSTMs \citep{hochreiter1997long}, Highway Networks \citep{srivastava2015highwaynetworks} and ResNets \citep{he2015deepresiduallearningimage} is well known in the literature, as ResNets are modeled after the LSTM's Constant Error Carousel (CEC) \citep{Schmidhuber2025WhoInventedRes}.

\subsection{The Scaling Hypothesis}
Backpropagation through time has been largely avoided in recent literature due to lack of parallelizability.
However, due to the fact that there is ``No Free Lunch for Parallelism'' we think a whole category of capabilities might be reserved for
models that act in inherently serial fashion, as articulated by the Serial Scaling Hypothesis \citep{liu2025serialscalinghypothesis}.
We think this is a crucial scaling dimension to explore given data that has an inherent sequential bias, e.g. iteratively improving code and fixing bugs, as is captured in our dataset.

\subsection{Implications}
Naively these scaling laws are not properly exploitable given current hardware.
Current hardware is designed for embarrassingly parallel tasks while this regime defaults to a near worst case scenario where the workload is memory bound, inherently serial with small kernel launch bounds and requiring frequent CPU involvement.
Beyond optimization to reduce overhead and development of custom kernels to reduce unnecessary memory materialization, little can be done to improve overall walltime as a function of sequence length.
However, we still think these scaling trends are necessary to explore given the nature of the data and the promises of potential emergent long term planning capabilities as a result of deep credit assignment.

\subsection{Decentralized Training}
Given that the scaling trends observed exhibit a clear benefit to longer sequences and also always amortize as a function of wall time, the synchronization frequency of a sufficiently scaled model
is extremely low. In our experiments, step time already reached 20 minutes at a sequence length of 1024 with a relatively shallow frame-head. Additionally due to the compute to parameter ratio exhibited by recurrent models, the synchronization time itself tends to zero comparatively as sequence length increases.
Time scaling associated with more parameters makes parameter scaling less attractive given the multiplicative increase in wall-time to an already high wall-time.
Synchronizing the parameters of a rather small model over the internet in an infrequent manner thus represents an ideal use case for decentralized distributed training.

\subsection{Limitations}
While the scaling trends observed are promising, due to the long wall-time required to train these models, we consider this kind of model difficult to train in the current regime.
We believe we did not yet reach a point where any empirically observable interpretable effect manifests in generations that would categorically set it apart from traditional language models.
For any potential emergent planning capabilities, we would expect that a training sequence length of at least $10^5$ is required, which is currently intractable due to immature training infrastructure.
Additionally, we expect that further scaling of $n_{hidden}$ and $d_{model}$ are required at these sequence lengths, along with an increase in the number of layers in the main sequence model.
To make large scale training of these models feasible, significant effort is likely required to reduce overhead with a tailor-made model implementation of the model utilizing custom kernels and fine-grained manual memory management.

\section{Conclusion}

Long-horizon perception and control demand \emph{true} serial computation. We formalized this via (i) \emph{true depth}—the number of inherently sequential steps—and (ii) \emph{recurrence completeness}—the ability to realize general, non-associative recurrent updates. From these definitions we proved three impossibility results: any architecture with a parallelizable forward \emph{or} backward pass cannot be recurrence-complete; architectures with parallelizable (scan-like) input aggregation are likewise excluded; and, as a corollary, constant-depth Transformers, SSM families, and “parallelizable RNNs” are insufficient in the worst case. We further introduced \emph{input-length proportionality} and \emph{input aggregation criticality}, predicting a critical horizon beyond which non-recurrence-complete models fail to form correct state.

Empirically, diagnostics that force serial evaluation (FRJT, Withheld Maze) exhibit depth-dependent cliffs for time-parallel models, while a lightweight LSTM generalizes substantially farther. In a practical setting, our \emph{Recurrence-Complete Frame-based Action Model}—attention within frames, LSTM over time—trained on GitHub-derived text-video shows a clear power law in trained sequence length at fixed parameters; longer sequences uniformly improve early and late positions and ultimately amortize their linear wall-time cost. Taken together, the theory and results indicate that serial computation is not only necessary, but potentially beneficial to increase model expressivity.

\bibliographystyle{plainnat}
\bibliography{references}

\appendix

\section{No Free Lunch for Parallelism Proof}
\label{sec:no-free-lunch-proof}
\paragraph{Model.}
A computation is a finite acyclic graph (DAG) of unit-cost primitive operations; edges are data dependencies.
The \emph{true depth} is the length of the longest directed path.
For a function \(y=F(h_t,z)\), we say \(y\) \emph{depends on} \(h_t\) if
\(\exists\,h_t,h_t',z:\ F(h_t,z)\neq F(h_t',z)\).

\begin{theorem}
\label{thm:rc-depth}
Assume the architecture is \emph{recurrence-complete}: for any function
\(g:\mathcal H^k\times\mathcal X\to\mathcal H\) there exists a program in the architecture computing
\(h_t = g(h_{t-1},\ldots,h_{t-k},x_t)\).
Regard \(g\) as an opaque primitive (no algebraic identities are assumed beyond extensional equality).
Then for each sequence length \(n\) there exists a choice of \(g\) and inputs
such that any correct computation has true depth \(\Omega(n)\).
\end{theorem}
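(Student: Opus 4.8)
The plan is to exhibit a single adversarially chosen transition $g$ together with inputs that force a length-$n$ chain of genuine data dependencies, and then to argue that any DAG respecting that chain has a directed path of length $\Omega(n)$. First I would use recurrence-completeness to pick a convenient hard instance: since the architecture realizes \emph{any} $g$, I take one that depends only on $h_{t-1}$ and $x_t$ (ignoring $h_{t-2},\dots,h_{t-k}$), so it suffices to treat $k=1$. I instantiate $g$ as a family of maps $g(\cdot,x)=\pi_x(\cdot)$ that are injective in the hidden argument for each fixed $x\in\mathcal X$. With $h_0$ fixed and inputs $x_1,\dots,x_n$ chosen, the run $h_t=g(h_{t-1},x_t)$ is well defined, and injectivity makes each step sensitive: for every $t$ there exist $h_{t-1}\neq h_{t-1}'$ with $g(h_{t-1},x_t)\neq g(h_{t-1}',x_t)$, so $h_t$ \emph{depends on} $h_{t-1}$ in exactly the sense defined above. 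Chaining these sensitivities gives that the final output $h_n$ depends on $h_{n-1}$, which depends on $h_{n-2}$, and so on down to $h_0$.

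Next I would lift this extensional dependency chain to a statement about the computation DAG. Because $g$ is an opaque primitive with no exploitable algebraic identities, the only operation that can ``advance'' the recurrence by one index is an application of the $g$-gate to a value extensionally equal to the genuine predecessor. Concretely, I would show that on any input for which the circuit is correct there must be a $g$-gate whose output equals $h_n$ and whose hidden input equals $h_{n-1}$; that gate's hidden input must in turn be produced, along a directed path, by a $g$-gate outputting $h_{n-1}$ and fed $h_{n-2}$, and so forth. These $n$ applications of $g$ are totally ordered by data-dependency edges, yielding a directed path of length $\ge n$, i.e.\ true depth $\Omega(n)$.

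The hard part is ruling out \emph{parallel shortcuts}: a priori the architecture might avoid the serial chain by evaluating $g$ speculatively on many candidate predecessors at once and later selecting, or by exploiting that composition of the induced maps $\pi_{x_t}$ is associative and hence amenable to a logarithmic-depth scan. I would close this gap exactly where the opacity hypothesis bites, via an adversary (incompressibility) argument: take $\mathcal H$ to be large — infinite, or of size super-polynomial in the circuit width — and let $g$ be drawn so that $g(h,x)$ is information-theoretically unpredictable without an explicit evaluation at $(h,x)$, for instance a uniformly random function. Then $h_t$ is statistically independent of everything a circuit computes unless some directed path has already evaluated $g$ on the true $h_{t-1}$; a depth-$d$ circuit only realizes $g$-chains of length $\le d$, so with high probability over $g$ it cannot output $h_n$ whenever $d<n$. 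The probabilistic method then yields a \emph{fixed} choice of $g$ and inputs on which every correct computation has true depth $\ge n$.

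This final step is also what pins down the resource model: without a width or precision bound an ``enumerate-and-compose'' scan would collapse the depth to $O(\log n)$, so the $\Omega(n)$ lower bound must be read against the same bounded-resource (finite/constant-precision) regime under which the companion $\mathrm{TC}^0/\mathrm{AC}^0$ upper bounds for time-parallel models are stated. Under that regime the opacity of $g$ is precisely the obstruction that prevents associativity from being exploited, and the serial chain is unavoidable.
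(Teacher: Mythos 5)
Your first two paragraphs reproduce, in more careful language, exactly the paper's own proof: the paper defines ``depends on'' extensionally, argues that if $h_t$ were not an ancestor of $h_{t+1}$ in the DAG then $h_{t+1}$ would be a function of nodes ``independent of $h_t$,'' contradicting dependence, and chains ancestry to get a length-$n$ path. Where you genuinely depart from the paper is your third and fourth paragraphs, and the departure is substantive: the paper never confronts the speculative-evaluation shortcut you raise, and its key inference (``not an ancestor $\Rightarrow$ independent of the choice of $h_t$'') silently conflates varying $h_t$ as a free argument of $g$ with varying it as a derived intermediate value of the circuit. Your worry is not hypothetical: for a finite, modestly sized $\mathcal H$ and unbounded width, the tabulate-and-compose prefix scan computes every $h_t$ in $O(\log n \cdot \log|\mathcal H|)$ depth for an \emph{arbitrary}, fully opaque, non-associative $g$, because the circuit can query $g$ on all of $\mathcal H\times\{x_t\}$ and composition of lookup tables is associative regardless of any identity on $g$; in that DAG $h_t$ is not an ancestor of $h_{t+1}$ yet every output is correct, so the paper's dichotomy fails as stated, and even its Remark misdiagnoses the escape hatch as ``scannable identities of $g$'' when no identity is needed. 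The theorem is only true once $\mathcal H$ is too large to tabulate relative to the circuit's width (or is infinite), and your incompressibility argument --- random $g$ over a super-polynomially large $\mathcal H$, counting $g$-chains against depth, probabilistic method to fix $g$ --- is the standard and essentially necessary way to convert the extensional-dependence intuition into a genuine depth lower bound under explicit resource hypotheses. What the paper's route buys is brevity and an intuition that matches the intended reading; what yours buys is an actual proof, plus an explicit statement of the bounded-width/precision regime that the paper leaves implicit and that its companion $\mathrm{TC}^0/\mathrm{AC}^0$ citations already presuppose.
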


\begin{proof}
  Let a computation be a DAG whose nodes are unit-cost primitive operations with edges denoting data dependencies.
  Assume the architecture can realize the update
  \(
  h_{t+1} = g(h_t,\ldots,h_{t-k},x_{t+1})
  \)
  for an \emph{opaque} $g$, and suppose for all $t$ the value $h_{t+1}$ \emph{depends on} $h_t$, i.e., there exist $h_t \neq h_t'$ with the same other inputs such that $g$ outputs different values.
  Consider any correct computation DAG producing $h_0,\ldots,h_n$.
  If $h_t$ were \emph{not} an ancestor of $h_{t+1}$ in the DAG, then $h_{t+1}$ would be a function of nodes that are independent of $h_t$, hence independent of the choice of $h_t$---contradicting the assumption that $h_{t+1}$ depends on $h_t$.
  Therefore $h_t$ is an ancestor of $h_{t+1}$ for every $t$.
  This yields a directed chain $h_0 \to h_1 \to \cdots \to h_n$ of length $n$, so the true depth is at least $n$.
\end{proof}

\begin{remark}
This is a worst-case bound over \(g\). Particular choices of \(g\) that obey associative/scannable
identities may admit \(O(\log n)\) depth, but recurrence-completeness requires representing
non-scanable \(g\) as well, and those force \(\Omega(n)\) serial steps.
\end{remark}

\section{Reverse-Mode non-parallelizability of Recurrence-Complete models}
\label{sec:reverse-depth-nonparallelizable}
\begin{lemma}
  \label{lem:reverse-depth}
  Under the setting of Theorem~\ref{thm:rc-depth}, assume $g$ is differentiable and for each $t$ the Jacobian $\partial g/\partial h_t$ depends on $h_t$. Let the loss be any function of $h_n$ with nonzero gradient. Then reverse-mode AD (backprop) has worst-case true depth $\Omega(n)$.
  \end{lemma}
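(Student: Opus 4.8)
The plan is to mirror the forward-direction argument of Theorem~\ref{thm:rc-depth}, but applied to the \emph{adjoint} variables produced by reverse-mode AD. Writing the recurrence as $h_{t+1}=g(h_t,\ldots,h_{t-k+1},x_{t+1})$ and the loss as $L=\ell(h_n)$ with $\nabla\ell\neq 0$, I would introduce the adjoints $\bar h_t := \partial L/\partial h_t$. By the chain rule (backpropagation through time) these satisfy the backward recurrence
\[
\bar h_t=\sum_{j=1}^{\min(k,\,n-t)}\Big(\frac{\partial h_{t+j}}{\partial h_t}\Big)^{\!\top}\bar h_{t+j},
\]
so in particular $\bar h_t$ is computed from $\bar h_{t+1}$ via the transposed Jacobian $J_t:=\partial g/\partial h_t$ evaluated at the forward value $h_t$. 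The goal is to show the backward computation contains a serial dependency chain $\bar h_n\to\bar h_{n-1}\to\cdots\to\bar h_0$ of length $n$, whence its true depth is $\Omega(n)$.

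The central step is to certify that the dependency $\bar h_{t+1}\to\bar h_t$ is \emph{genuine} in the data-dependence sense of Theorem~\ref{thm:rc-depth}, i.e.\ that varying $\bar h_{t+1}$ with other inputs fixed changes $\bar h_t$. Since the bound is worst-case over $g$ and inputs, I would exploit that freedom: choose $g$ depending only on its first hidden argument (effectively $k=1$) with $J_t$ \emph{invertible} at the chosen forward values. Then $\bar h_{t+1}\mapsto J_t^{\top}\bar h_{t+1}$ is injective, so $\bar h_t$ genuinely depends on $\bar h_{t+1}$; and because $\bar h_n=\nabla\ell(h_n)\neq 0$, propagating through invertible Jacobians keeps every $\bar h_t$ nonzero, so the dependency is ``live'' at each step. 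Applying the same ``depends-on implies ancestor'' argument as in Theorem~\ref{thm:rc-depth} then forces $\bar h_{t+1}$ to be an ancestor of $\bar h_t$ for every $t$, yielding the claimed chain of length $n$.

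The hypothesis that $J_t=\partial g/\partial h_t$ depends on $h_t$ plays a twofold role, and it is also where the main obstacle lies. First, ``depends on $h_t$'' rules out a constant (in particular identically zero) Jacobian that could otherwise let backprop use precomputed constants in parallel; it forces each adjoint step to read the forward value $h_t$, so the backward DAG inherits the length-$n$ forward chain of Theorem~\ref{thm:rc-depth} as well. Second, and more delicately, in the multi-step case ($k>1$) the term $J_t^{\top}\bar h_{t+1}$ could in principle be cancelled by the contributions from $\bar h_{t+2},\ldots,\bar h_{t+k}$, so the hard part is excluding such cancellations. I expect to handle this exactly as above, by choosing a worst-case $g$ for which only the $j=1$ term is present, reducing to the invertible single-step case; since opacity of $g$ (hence of $J_t$) precludes any algebraic shortcut that would collapse the chain, this suffices to establish $\Omega(n)$ backward depth.
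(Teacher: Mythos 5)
Your proposal is correct and follows essentially the same route as the paper's proof: write the reverse-mode adjoint recursion $\lambda_t=(\partial g/\partial h_t)^\top\lambda_{t+1}+\cdots$ and apply the same depends-on-implies-ancestor argument of Theorem~\ref{thm:rc-depth} to obtain an $n$-long chain $\lambda_n\to\lambda_{n-1}\to\cdots\to\lambda_0$ in the backward DAG. The only notable difference is that you explicitly exclude cancellation among the multi-step ($k>1$) contributions by exploiting worst-case freedom to pick $g$ with $k=1$ and invertible Jacobians, whereas the paper disposes of this by stipulating that the omitted terms ``do not bypass $h_{t+1}$'s contribution''---so your write-up is, if anything, the more careful version of the same argument.
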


  \begin{proof}
  Reverse-mode satisfies the recursion
  \(
  \lambda_t \;=\; \bigl(\partial g/\partial h_t\bigr)^\top \lambda_{t+1} \;+\; \cdots
  \)
  with $\lambda_n = \partial \mathcal{L}/\partial h_n$ and ``$\cdots$'' denoting terms that do not bypass $h_{t+1}$'s contribution.
  Thus $\lambda_{t}$ depends on $\lambda_{t+1}$ and (because $\partial g/\partial h_t$ depends on $h_t$) on $h_t$, which in turn depends on $h_{t-1}$, etc.
  By the same ancestor argument as in Theorem~\ref{thm:rc-depth}, $\lambda_{t+1}$ must be an ancestor of $\lambda_t$ for all $t$, yielding an $n$-long chain in the reverse pass.
  Hence the true depth is $\Omega(n)$.
\end{proof}

\section{Parallelizable Input-data aggregation precludes Recurrence-Completeness}
\label{sec:parallelizable-input-data-aggregation-precludes-recurrence-completeness}

We formalize ``parallelizable input aggregation'' as a \emph{structural} depth bound:
there exists a sublinear function $D(n)=o(n)$ such that for every parameter setting of the
architecture and every $t\le n$, the latent $\mathbf{h}_t$ can be computed from the prefix
$(\mathbf{x}_1,\dots,\mathbf{x}_t)$ by a circuit whose true depth is at most $D(t)$.
Equivalently: the forward computation that maps $(\mathbf{x}_1,\dots,\mathbf{x}_n)\mapsto \mathbf{h}_n$
has parameter-independent true depth $\le D(n)$.

\begin{theorem}
\label{thm:parallel-agg-not-rc-simple}
Any architecture whose input aggregation satisfies the above sublinear depth bound
is \emph{not} recurrence-complete.
\end{theorem}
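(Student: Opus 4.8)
The plan is to argue by contradiction, reducing directly to the worst-case depth lower bound already established in Theorem~\ref{thm:rc-depth}. Suppose, for contradiction, that the architecture both satisfies the sublinear depth bound $D(n)=o(n)$ and is recurrence-complete. Recurrence-completeness means that for \emph{every} transition function $g$ there is a parameter setting realizing the update $\mathbf{h}_t=g(\mathbf{h}_{t-1},\ldots,\mathbf{h}_{t-k},\mathbf{x}_t)$; in particular it must realize the \emph{opaque}, dependency-forcing $g$ exhibited in Theorem~\ref{thm:rc-depth}, for which $\mathbf{h}_{t+1}$ genuinely depends on $\mathbf{h}_t$ at every step.

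First I would fix that hard $g$ and let the architecture select a parameter setting $\theta^\star$ computing the corresponding recurrence. The sublinear bound is a \emph{universal} (parameter-independent) statement, so it applies to $\theta^\star$ as well: the forward map $(\mathbf{x}_1,\ldots,\mathbf{x}_n)\mapsto\mathbf{h}_n$ realized by $\theta^\star$ has true depth at most $D(n)$. On the other hand, by Theorem~\ref{thm:rc-depth} any correct computation of $\mathbf{h}_0,\ldots,\mathbf{h}_n$ under this $g$ contains the directed chain $\mathbf{h}_0\to\mathbf{h}_1\to\cdots\to\mathbf{h}_n$, forcing true depth at least $n$. The computation performed by $\theta^\star$ is one such correct computation, so its true depth $d_n$ satisfies $n\le d_n\le D(n)$.

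To close the argument I would choose $n$ large enough that $D(n)<n$, which is possible precisely because $D(n)=o(n)$: there exists $N$ with $D(n)<n$ for all $n\ge N$. For any such $n$ the chain $n\le d_n\le D(n)<n$ is jointly unsatisfiable, the desired contradiction. Hence no architecture with sublinear-depth input aggregation can be recurrence-complete.

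The main obstacle I anticipate is conceptual rather than computational: ensuring the two hypotheses are read with the correct quantifier order so that they bind the same object. The sublinear bound must be genuinely universal over parameters, so that it constrains the specific $\theta^\star$ handed to us by recurrence-completeness; and one must guard against the objection that the architecture might compute $\mathbf{h}_n$ by some shortcut that avoids materializing the intermediate states. It is exactly the \emph{opaqueness} of $g$ in Theorem~\ref{thm:rc-depth} that forecloses this: since $\mathbf{h}_{t+1}$ depends on $\mathbf{h}_t$ with no exploitable algebraic identity, $\mathbf{h}_t$ is forced to be an ancestor of $\mathbf{h}_{t+1}$ in any correct DAG. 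Once these points are in place, the asymptotic gap between $o(n)$ and the length-$n$ chain closes the argument immediately.
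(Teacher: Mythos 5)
Your proposal is correct and follows essentially the same route as the paper's own proof: assume recurrence-completeness for contradiction, invoke Theorem~\ref{thm:rc-depth} to get an opaque $g$ forcing true depth $\Omega(n)$, and collide that with the parameter-independent sublinear bound $D(n)=o(n)$ for large $n$. Your added care about quantifier order and fixing the specific parameter setting $\theta^\star$ only makes explicit what the paper's argument leaves implicit.
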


\begin{proof}
Assume for contradiction that the architecture is recurrence-complete.
By Theorem~\ref{thm:rc-depth}, there exists a recurrent update $g$ and inputs such that any
correct computation producing $\mathbf{h}_n$ has true depth $\Omega(n)$.
But by the hypothesis of parallelizable aggregation, \emph{every} instantiation of the architecture
computes $\mathbf{h}_n$ with true depth at most $D(n)=o(n)$. For sufficiently large $n$ these
bounds are incompatible, yielding a contradiction. Hence the architecture cannot be
recurrence-complete.
\end{proof}

\begin{remark}
The same conclusion follows for reverse mode: Lemma~\ref{lem:reverse-depth} shows that a
recurrence-complete realization of such $g$ forces $\Omega(n)$ true depth in backprop.
If the architecture's aggregation/backward computation admits a sublinear depth bound,
this again contradicts recurrence-completeness.
\end{remark}

\section{Proof of Wall-time Amortization Claim}
\label{sec:walltime-amortization-proof}
\textbf{Claim.} For any $L_2>L_1$ there exists $T$ such that for all $t\ge T$ one has $\mathrm{loss}(t,L_2)<\mathrm{loss}(t,L_1)$.

\emph{Proof.} Let $s_i(t)=\gamma t/L_i$ for $i\in\{1,2\}$. Then
\begin{align}
\log\frac{\mathrm{loss}(t,L_2)}{\mathrm{loss}(t,L_1)}
&= \bigl(\log A(s_2)-\log A(s_1)\bigr)
    - \alpha(s_2)\log\frac{L_2}{L_1}
    + \bigl(\alpha(s_1)-\alpha(s_2)\bigr)\log L_1 \\
&\qquad + \log\!\bigl(1+r(L_2,s_2)\bigr) - \log\!\bigl(1+r(L_1,s_1)\bigr).
\label{eq:log-ratio}
\end{align}
As $t\to\infty$, $s_i(t)\to\infty$, so the first, third, and last two terms of \eqref{eq:log-ratio} vanish, while the middle term tends to $-\alpha_\infty\log(L_2/L_1)<0$. Hence the whole expression is eventually negative; exponentiating yields $\mathrm{loss}(t,L_2)<\mathrm{loss}(t,L_1)$ for all sufficiently large $t$.

Alternatively, for readers preferring explicit $\varepsilon$--$\delta$ bounds: fix $\varepsilon>0$ and choose $S$ so that for all $s\ge S$,
\[
\bigl|\log A(s)-\log A_\infty\bigr|\le\varepsilon,\quad
\bigl|\alpha(s)-\alpha_\infty\bigr|\le\varepsilon,\quad
\sup_{L\ge 1}\bigl|\log(1+r(L,s))\bigr|\le\varepsilon.
\]
Pick $T$ with $s_i(T)\ge S$ for $i=1,2$. Then for $t\ge T$,
\[
\log\frac{\mathrm{loss}(t,L_2)}{\mathrm{loss}(t,L_1)}
\le 3\varepsilon + \varepsilon\log L_1 - (\alpha_\infty-\varepsilon)\log\frac{L_2}{L_1},
\]
which is negative for sufficiently small $\varepsilon$, proving the claim.

\section{Non-linear, serial integration vs. weighting-based aggregation}
\label{sec:non-linear-serial-integration-vs-weighting-based-aggregation}
We compare serial integration of frame embeddings with parallel aggregation by replacing the main sequence model with causal self-attention and learned position embeddings.
In this setting, we observe earlier stagnation and diminishing returns as a function of training steps.
This trend can be seen in Figure \ref{fig:transformer-vs-lstm-seq}.

\begin{figure}[H]
  \centering
  \includegraphics[width=0.6\textwidth]{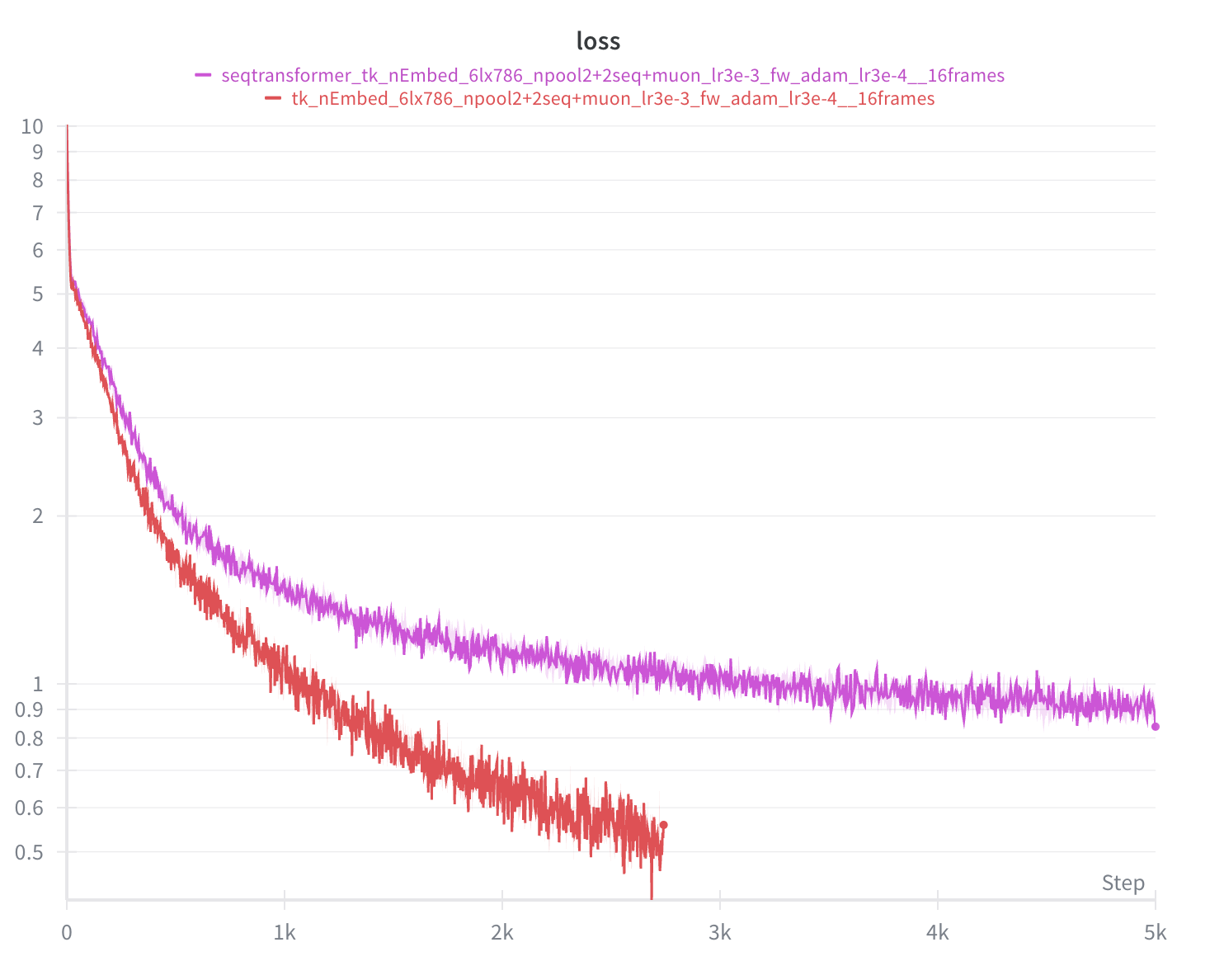}
  \caption{Transformer vs. LSTM as main sequence model}
  \label{fig:transformer-vs-lstm-seq}
\end{figure}
We note however that this experiment was conducted on a smaller dataset filtered for ``compilers and interpreters'' compared to the scaling laws described above, which were derived from a larger high-quality subset of GitHub filtered for ``technical excellence''.
The same type of power law however persists across both datasets and the experiment should retain significance.

\section{Critical Batch Size}
\label{sec:critical-batch-size}
We observe that there exists a critical batch size for a given sequence length after which increasing batch size does not significantly improve convergence speed as a function of step count.
Given that data-parallelism is the only horizontally scalable dimension for this architecture, we suggest maximizing the batch size is always desirable; however, we note that doing so does result in diminishing returns.

For example, scaling the batch size from 512 (dark green) to 1024 (orange) at a sequence length of 2 does allow doubling of the learning rate to $6\times 10^{-3}$, however training at sequence length 4 (light green) - matching the amount of supervised actions per update at batch size 512 - outperforms the training run drastically with lower learning rate.
\begin{figure}[H]
  \centering
  \includegraphics[width=0.6\textwidth]{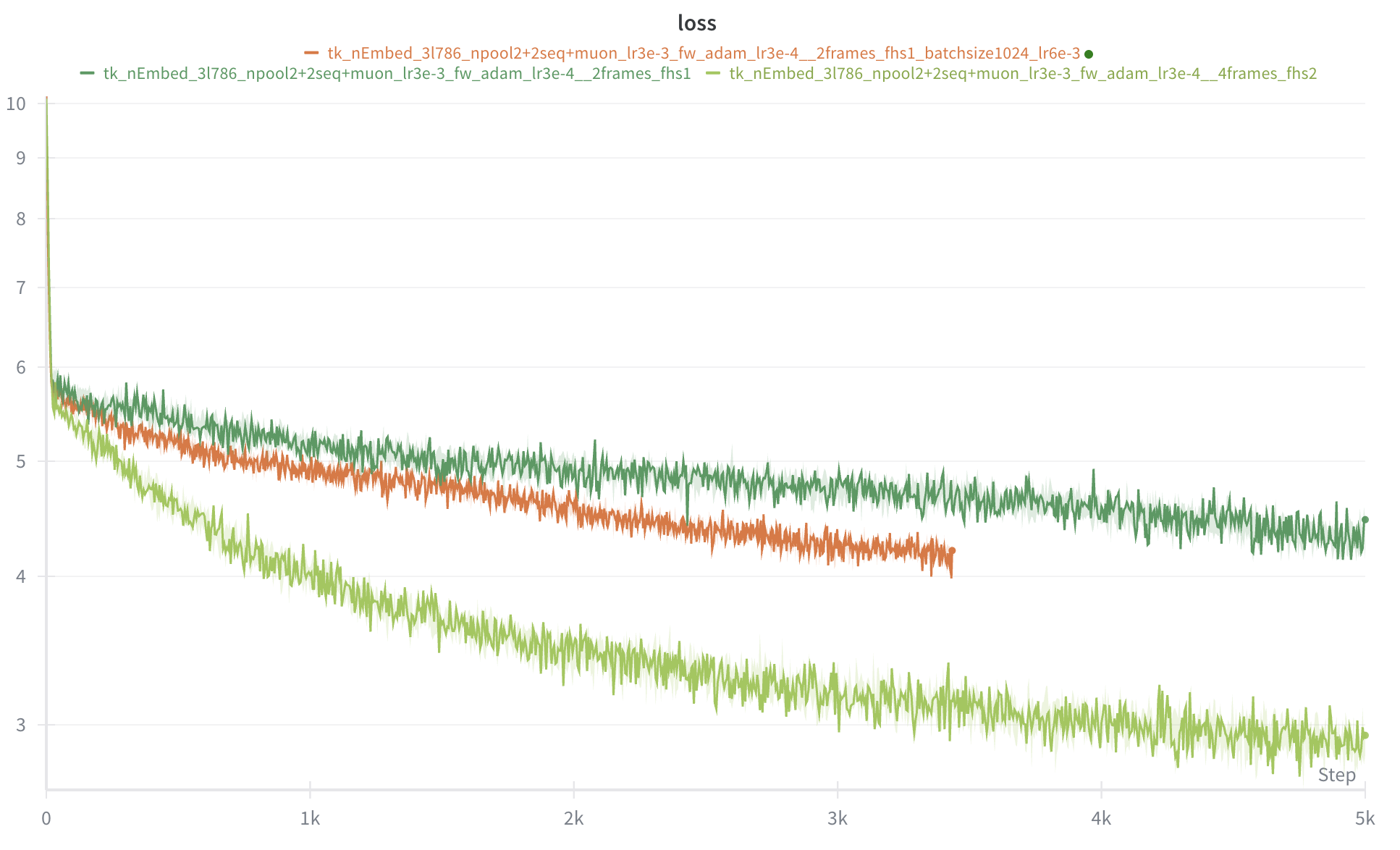}
  \caption{Sequence length 2 at batch size 512 \& 1024 vs. sequence length 4 at batch size 512}
  \label{fig:critical-batch-size}
\end{figure}

\section{Fully Observable Frame Experiment}
\label{sec:fully-observable-frame-experiment}

In this experiment, we train on a small toy-dataset where the predicted text is fully observable within one frame.
In the dataset, multiple Latin paragraphs are typed out while the file never exceeds the frame's bounds, therefore no scrolling ever occurs,
making the full state of the file observable in every frame.
In such a setting, if the frame embedding only logically contains the frame $x_t$ and no other information, then attending to previous frames is not useful,
as $x_{t-1}$ is already fully observable within $x_{t}$ due to the fact that characters are never deleted in this toy-dataset,
making successive frames merely additive compositions of the previous frame with the newly inserted characters.
The task is to overfit to the dataset, therefore we do not expect any generalization on unseen data.
However, the degree to which the model is able to correctly fit the dataset depends on sequence length.

\begin{figure}[H]
  \centering
  \includegraphics[width=0.8\textwidth]{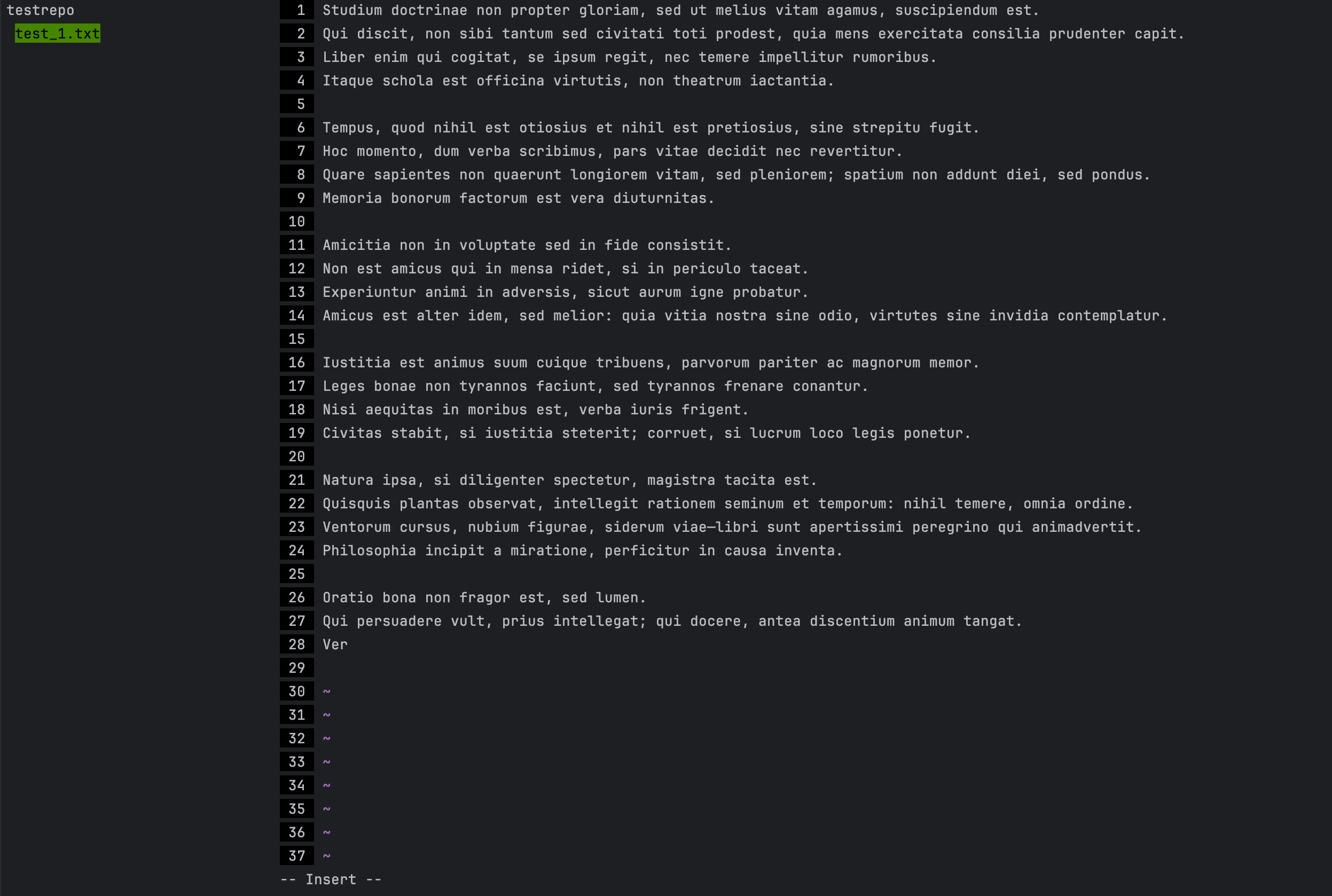}
  \caption{Fully Observable Frame Experiment}
  \label{fig:fully-observable-frame-experiment}
\end{figure}

Even in such a setting, the power law as a function of sequence length persists:

\begin{figure}[H]
  \centering
  \includegraphics[width=0.6\textwidth]{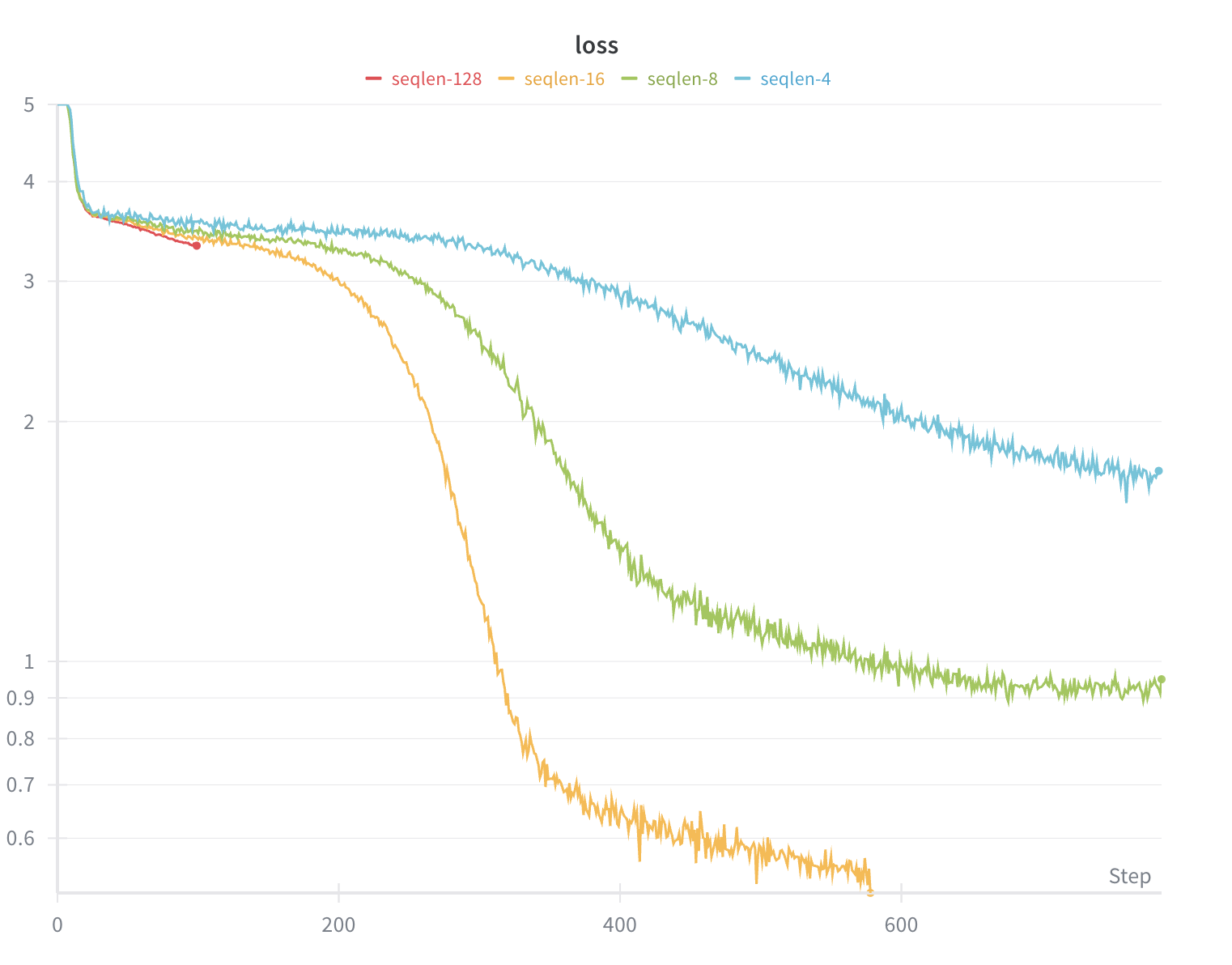}
  \caption{Fully Observable Frame Experiment Power Law}
  \label{fig:fully-observable-powerlaw}
\end{figure}

\section{Representativity of Mean Cross Entropy}
\label{sec:representativity-of-mean-cross-entropy}
For frame-based action models, we observe that both early token loss and late token loss are within the vicinity of the mean cross entropy.
This contradicts commonly accepted wisdom for traditional language models where training on longer sequence lengths may be detrimental to early token loss, resulting in an overall worse model when good performance in low token positions is desired.
In the fully observable frame experiment, we observe that increasing sequence length allows the model to reach lower loss overall and does so while maintaining mean loss in both low and high token positions within the vicinity of the mean cross entropy.
We train models for both sequence length 4 and 16 and batch size 512.
We measure validation cross-entropy loss, sustained accuracy, and evaluation accuracy across 32 tokens, forcing both models to length-generalize beyond their training sequence length.
Given that the task is explicitly to overfit to the dataset and to validate the extent to which the model is able to do so, data is thus sampled from the training set.

We observe that the mean cross-entropy across the first 4 and the last 4 tokens of the inferred sequence is within the vicinity of the mean validation cross-entropy of the full 32-token sequence.
Additionally, we observe that the validation mean cross-entropy is slightly lower than the training mean cross-entropy. We attribute this to a diluting effect of the difficult tokens present in the sequence, paired with graceful length generalization exhibited by the LSTMs,
beyond its training sequence length.
This solidifies average cross entropy as a representative metric for modeling performance and that its reduction as described per the scaling law is not caused due to skewed distribution across token positions.

\begin{figure}[H]
  \centering
  \includegraphics[width=0.6\textwidth]{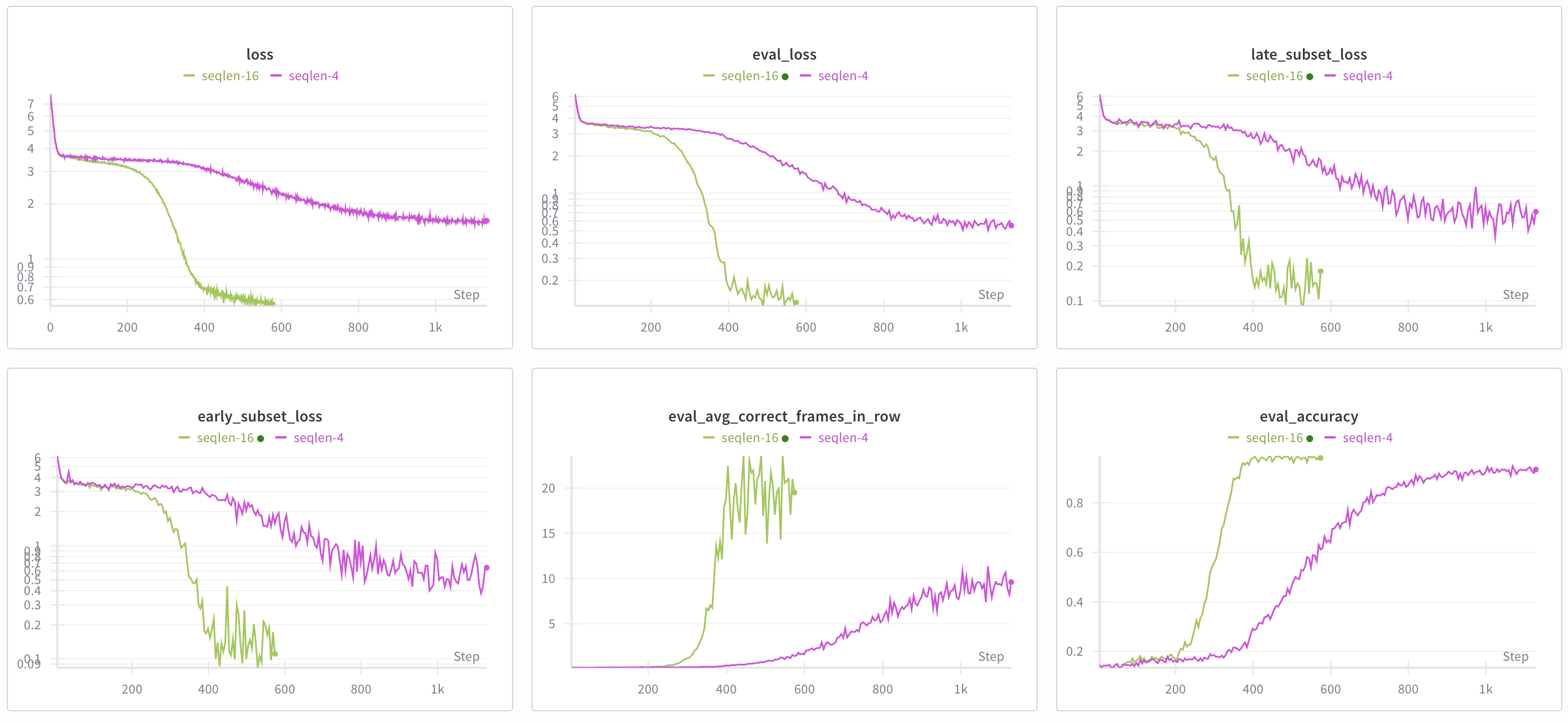}
  \caption{Validation metrics for fully observable frame experiment with sequence length 4 and 16}
  \label{fig:mean-cross-entropy}
\end{figure}

\section{Fixing Number of Actions per Update: Reducing Batch Size}
\label{sec:fixating-number-of-actions-per-update}
While we generally argue that for the regime of modeling long-running action sequences the fundamental unit of optimization should be the number of sequences---not the number of actions---
and we inherently acknowledge actions to be context-dependent and derivable from the past - hence the need to scale sequence length to increase confidence in future actions -
we also show that in the setting of fixing $batch\_size \times sequence\_length$, scaling sequence length while decreasing batch size still improves convergence speed as a function of step count and wall-time.
Doing so however inherently induces variance as sequence length increases, therefore we do not recommend reducing batch size in practice.
However, this increased variance only places higher sequence length runs at a disadvantage, which we still observe to perform better.

\begin{figure}[H]
  \centering
  \includegraphics[width=0.6\textwidth]{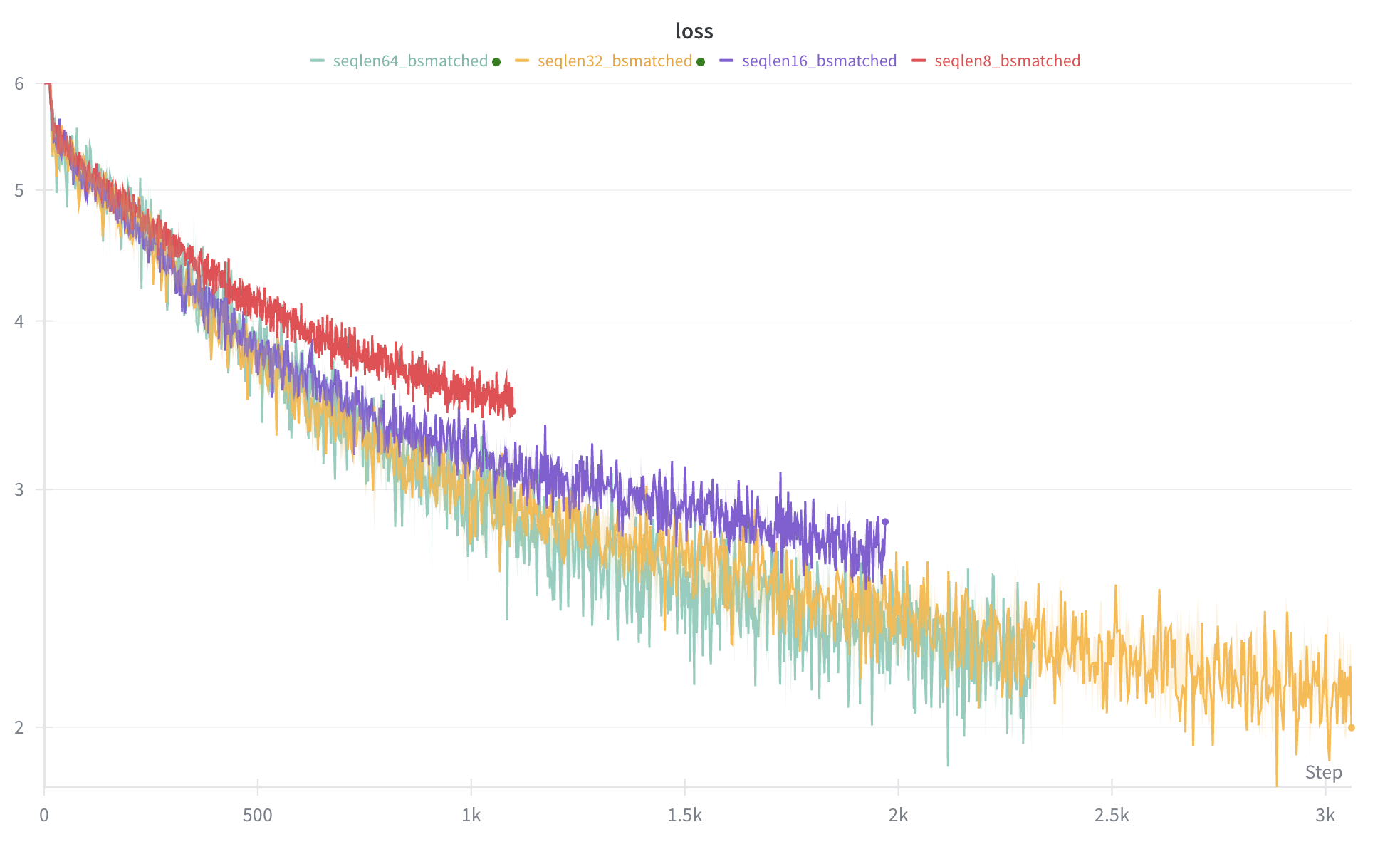}
  \caption{Fixing Number of Actions per Update}
  \label{fig:fixing-number-of-actions-per-update}
\end{figure}

\section{Model FLOP Estimation}
\label{sec:model-flops}
We estimate forward FLOPs assuming one multiply--add (MAC) counts as $2$ FLOPs and
neglecting bias/dropout/softmax/normalization and other elementwise costs (they are
$\mathcal{O}(BTD)$ and noted explicitly as ``low-order'').

\paragraph{Frame head.}
Notation: $B$ batch size, $D$ model width, $N_f$ tokens per frame before any pooling,
$P\in\mathbb{N}$ pooling stages (each halves the sequence length), $L_t$ transformer
blocks after the $P$ poolings. Let $T_s := N_f/2^s$ for $s=0,\dots,P$.

The $P$ pre-pooling blocks run at lengths $T_0,\dots,T_{P-1}$, then $L_t$ blocks run at
$T_P$, followed by a reduction LSTM of width $D$ over $T_P$.
Using a TF-block cost $24\,B\,T\,D^2 + 4\,B\,T^2 D$ and an LSTM cost $16\,B\,T\,D^2$,
\begin{align}
\mathrm{FLOPs}_{\text{frame-head}}
&\approx 24B D^2\sum_{s=0}^{P-1}T_s \;+\; 4B D\sum_{s=0}^{P-1}T_s^2
\;+\; L_t\!\left(24B D^2 T_P + 4B D T_P^2\right) \;+\; 16B D^2 T_P \;+\; \mathcal{O}(BDN_f)\\
&= B D N_f^2\!\left[\frac{16}{3}\!\left(1-4^{-P}\right) + 4 L_t\,4^{-P}\right]
\;+\; B D^2 N_f\!\left[48 + (24L_t-32)\,2^{-P}\right] \;+\; \mathcal{O}(BDN_f),
\end{align}
where we used $\sum_{s=0}^{P-1}T_s=2N_f(1-2^{-P})$ and $\sum_{s=0}^{P-1}T_s^2=\frac{4}{3}N_f^2(1-4^{-P})$.

\paragraph{Main sequence model.}
Notation: $T_s$ tokens in the sequence, $L_s$ layers, $H$ LSTM hidden size.
Per layer (LSTM then MLP $D\!\to\!4D\!\to\!D$):
\[
\mathrm{FLOPs}_{\text{layer}} \;\approx\; 8B T_s(DH+H^2) \;+\; 16B T_s D^2 \;+\; \mathcal{O}(B T_s D),
\]
hence
\[
\mathrm{FLOPs}_{\text{main}} \;\approx\; L_s\!\left[8B T_s(DH+H^2) + 16B T_s D^2\right] \;+\; \mathcal{O}(B T_s D L_s).
\]
In the common case $H=D$, this simplifies to $\mathrm{FLOPs}_{\text{main}} \approx 32B T_s D^2 L_s + \mathcal{O}(B T_s D L_s)$.

The following table shows the per-component forward FLOPs for often used hyperparameters.
\begin{figure}[H]
\begin{table}[H]
  \centering\small
  \begin{tabular}{lrr}
  \hline
  Component & FLOPs & Share \\
  \hline
  Frame-head TF block ($T=7680$) $\times 1$ & $1.4843\times 10^{14}$ & $51.60\%$ \\
  Frame-head TF block ($T=3840$) $\times 1$ & $5.1024\times 10^{13}$ & $17.74\%$ \\
  Frame-head TF blocks ($T=1920$) $\times 3$ & $5.9142\times 10^{13}$ & $20.56\%$ \\
  Frame-head reduction LSTM ($T=1920$) & $9.2771\times 10^{12}$ & $3.22\%$ \\
  Main LSTM (2 layers, $T=1024$) & $9.8956\times 10^{12}$ & $3.44\%$ \\
  Main MLP (2 layers, $T=1024$) & $9.8956\times 10^{12}$ & $3.44\%$ \\
  \hline
  \textbf{Frame-head subtotal} & $\mathbf{2.6788\times 10^{14}}$ & $\mathbf{93.12\%}$ \\
  \textbf{Main sequence subtotal} & $\mathbf{1.9791\times 10^{13}}$ & $\mathbf{6.88\%}$ \\
  \hline
  \textbf{Overall total} & $\mathbf{2.8767\times 10^{14}}$ & $\mathbf{100\%}$ \\
  \hline
  \end{tabular}
\end{table}
\caption{Per-component forward FLOPs (1 MAC = 2 FLOPs). Hyperparameters: $B=512$, $D=768$, frame size $48\times160$ ($N_f=7680$), pooling stages $P=2$ (each halves $T$), post-pooling transformer blocks $L_t=3$, main sequence length $T_s=1024$, depth $L_s=2$. Low-order terms (norms, residuals, pooling, activations) omitted.}
\end{figure}

\end{document}